\documentclass[10pt]{article} 
\usepackage[preprint]{tmlr}

\usepackage{hyperref}
\usepackage{url}

\usepackage{graphicx} 
\usepackage{amssymb}
\usepackage{stmaryrd}
\usepackage{amsmath}
\usepackage{amsthm}

\newtheorem{theorem}{Theorem}
\newtheorem{lemma}{Lemma}

\title{Tracking the Best Strategy in an Extensive-Form Game}

\author{\name Stephen Pasteris \email spasteris@turing.ac.uk \\
      \addr The Alan Turing Institute\\
      London, United Kingdom
      \AND
      \name Rahul Savani \email rahul.savani@liverpool.ac.uk \\
      \addr The Alan Turing Institute;\\
  The University of Liverpool \\
  Liverpool, United Kingdom
      \AND
      \name Theodore Turocy \email t.turocy@uea.ac.uk\\
      \addr The Alan Turing Institute;\\
  The University of East Anglia \\
  Norwich, United Kingdom}

\def\month{MM}  
\def\year{YYYY} 
\def\openreview{\url{https://openreview.net/forum?id=XXXX}} 

\begin{document}

\maketitle

\begin{abstract}
We consider the extensive-form bandit problem where on each trial the learner plays an extensive-form game against an oblivious adversary. We focus on the notion of switching regret, which measures the expected performance of the learner against that of any switching sequence of mixed strategies in retrospect. Our algorithm takes a parameter $\rho>0$ and achieves a switching regret of $\tilde{\mathcal{O}}((1/\rho+\rho K)\sqrt{H A T})$ where $K$ is the number of switches in the comparator sequence, $H$ is the maximum number of the learner's information sets that can be traversed during a play of the game and $A$ is the number of actions that the learner can possibly take. Our algorithm is extremely efficient, taking a per trial time of only $\mathcal{O}(H B)$ where $B$ is the maximum number of actions available to the learner at any of its information sets.
\end{abstract}

\newcommand{\nc}[1]{\newcommand{#1}}
\nc{\losf}{\lambda}
\nc{\tns}{\mathcal{L}}
\nc{\ver}{\mathcal{V}}
\nc{\lns}{\mathcal{N}}
\nc{\nns}{\mathcal{A}}
\nc{\ch}[1]{\mathcal{C}(#1)}
\nc{\anod}{v}
\nc{\rot}{r}
\nc{\ac}{a}
\nc{\pnt}[1]{p(#1)}
\nc{\st}{\sigma}
\nc{\rf}{\mu}
\nc{\sts}{\mathcal{S}}
\nc{\rfs}{\mathcal{M}}
\nc{\loss}[2]{\Lambda(#1,#2)}
\nc{\lrh}{\rho}
\nc{\nsw}{K}
\nc{\depth}{H}
\nc{\tnma}{A}
\nc{\nma}{B}
\nc{\be}{\begin{equation*}}
\nc{\ee}{\end{equation*}}
\nc{\alg}{\textsc{TrackEFG}}
\nc{\pol}{\pi}
\nc{\pols}{\mathcal{P}}
\nc{\indi}[1]{\llbracket #1 \rrbracket}
\nc{\rft}[1]{\mu_{#1}}
\nc{\lst}[1]{\sigma_{#1}}
\nc{\lsts}{\sigma}
\nc{\lost}[1]{\ell_{#1}}
\nc{\reg}[1]{R(#1)}
\nc{\nsws}[1]{K(#1)}
\nc{\cpols}{\vartheta}
\nc{\cpolt}[1]{\vartheta_{#1}}
\nc{\expt}[1]{\mathbb{E}\left[ #1\right]}
\nc{\mgp}[2]{m_{#1}(#2)}
\nc{\mgt}[2]{g_{#1}(#2)}

\section{Introduction}
We consider a (perfect recall) extensive form game. We call any mixed strategy of the learner a \emph{policy} and any deterministic realisation of how the learner's opponents (including any ``chance'' player) play an \emph{environment}. The extensive-form bandit problem consists of $T$ trials where on each trial we have an unknown environment $\rft{t}$ fixed a-priori. On each trial $t$ the learner stochastically plays the game against the enviroment $\rft{t}$\,, observing its information sets (a.k.a. \emph{infosets}) that are traversed. We denote the loss (i.e. the negated payoff) incurred by the learner on trial $t$ by $\lost{t}$.

The extensive-form bandit problem has been well studied by works such as \cite{ Kozuno2021ModelFreeLF, Bai2022NearOptimalLO, Fiegel2022AdaptingTG, Farina2021BanditLO, Maiti2025EfficientNA} which give bounds on the \emph{static regret}, which is the cumulative loss incurred by the learner minus that which it would have obtained by always playing according to the best fixed policy in retrospect. In this paper we focus on the more general notion of \emph{switching regret}, which is the cumulative loss incurred by the learner minus that which would have been expected by playing an arbitrary sequence of policies in retrospect. Formally, given a policy $\pol$ and an environment $\rf'$, we define $\loss{\pol}{\rf'}$ to be the expected loss incurred by the learner if it were to play the policy $\pol$ against the environment $\rf'$. Given a sequence of policies $\cpols\in\pols^T$ we define the \emph{(switching) regret} with respect to $\cpols$ as:
\be
\reg{\cpols}:=\sum_{t\in[T]}\lost{t}-\sum_{t\in[T]}\loss{\cpolt{t}}{\rft{t}}\,.
\ee
Given a sequence of policies $\cpols\in\pols^T$ we define:
\be
\nsws{\cpols}:=1+|\{t\in[T-1]\,|\,\cpolt{t+1}\neq\cpolt{t}\}|
\ee
which is the number of times the policy changes throughout the sequence. In this paper we give an algorithm \alg\ for the learner that takes a parameter $\lrh>0$ and achieves, for any sequence of policies $\cpols\in\pols^T$, an expected regret of:
\be
\mathbb{E}[\reg{\cpols}]\in\tilde{\mathcal{O}}\left(\left(\frac{1}{\lrh}+\lrh\nsws{\cpols}\right)\sqrt{\depth \tnma T}\right)
\ee
where $\depth$ is the maximum number of the learner's infosets that can be traversed during a play of the game and $\tnma$ is the number of actions that the learner can possibly take (i.e. the sum of the number of actions avaliable at each of the learner's infosets). A salient feature of \alg\ is its extreme computational efficiency, taking a per trial time of only $\mathcal{O}(\depth \nma)$ where $\nma$ is the maximum number of actions available at one of the learner's infosets.

From an algorithmic perspective \alg\ is simple - it essentially follows  \textsc{BalancedOMD} \cite{Bai2022NearOptimalLO} (we note though that we have replaced the use of the \emph{balanced exploration policy} for performance improvements) but at the end of each trial  we apply the \textsc{FixedShare} \cite{Herbster1995TrackingTB} update to the probability distribution over actions at each infoset visited on that trial. The analysis, however, is not a simple combination of (as far as we are aware) known analyses of \textsc{BalancedOMD} and \textsc{FixedShare}.

\subsection{Related Work}
The notion of switching regret was introduced in \cite{Herbster1995TrackingTB}, which, generalising the \textsc{Hedge} algorithm of \cite{Freund1997ADG}, gave the algorithm \textsc{FixedShare} for the problem of prediction with expert advice.
\textsc{FixedShare} was modified by \cite{Auer2002TheNM} to create the \textsc{Exp3.S} algorithm for non-stationary learning in the adversarial bandit problem. In the case in which the learner has a single infoset, our problem reduces to the adversarial bandit problem and our algorithm \alg\ reduces to \textsc{Exp3.S}.

The work \cite{Maiti2025EfficientNA} reduces the extensive-form bandit problem to adversarial online shortest path on a directed acyclic graph under semi-bandit feedback. For this problem the works \cite{Benk2007TheOS, Vural2020ShortestPL} give switching regret guarantees. We suspect that a refined analysis of \cite{Benk2007TheOS}, specific to extensive-form games, would lead to a regret of:
\be
\mathbb{E}[\reg{\cpols}]\in\tilde{\mathcal{O}}\left(\left(\frac{1}{\lrh}+\lrh\nsws{\cpols}\right)\sqrt{\ln(S)X T}\right)
\ee
where $S$ is the (massive) number of the learner's reduced strategies in the reduced strategic form of the game and $X$ is the maximum number of leaves that are reachable under any environment. Whilst this bound would be, in general, incomparable to ours, the algorithm has an extremely high per-trial time and space complexity of $\mathcal{O}(\tnma T)$. The work \cite{Vural2020ShortestPL} gives an algorithm with a regret of:
\be
\mathbb{E}[\reg{\cpols}]\in\tilde{\mathcal{O}}\left(\left(\frac{1}{\lrh}+\lrh\nsws{\cpols}\right)\sqrt{Y\tnma T}\right)
\ee
where $Y$ is the maximum number of the learner's infosets that are reachable under any of the learner's pure strategies. Not only is this regret bound significantly worse than ours but the algorithm has an extremely high per-trial time complexity of $\mathcal{O}(\tnma^4)$.

One may wonder whether the corralling machinery of \cite{Luo2022CorrallingAL} could be applied to \textsc{BalancedOMD} \cite{Bai2022NearOptimalLO} which efficiently obtains static regret bounds for the extensive-form bandit problem. However, there are issues in doing this. The first issue is that, to work with \cite{Luo2022CorrallingAL}, \textsc{Exp3} had to be modified which means that \textsc{BalancedOMD}, which generalises \textsc{Exp3}, would also need to be modified. The second issue is that, as far as we are aware, on any trial, the bias term would be dependent on the entire environment at that trial, which is unknown at the end of the trial. We believe that the bias term could be overestimated but this would lead to a dramatically worse regret bound than us and take a time of $\mathcal{O}(\tnma)$ to compute. We believe the same issues to hold when trying to corral the algorithms of \cite{Kozuno2021ModelFreeLF, Fiegel2022AdaptingTG}. With regard to the algorithms of \cite{Maiti2025EfficientNA, Farina2021BanditLO}, even if these algorithms could be corralled, they would suffer from a dramatically higher regret and time complexity than \alg.

Whilst the works \cite{Kozuno2021ModelFreeLF, Bai2022NearOptimalLO} are based on \emph{dilated mirror descent} \cite{Hoda2010SmoothingTF, Farina2021BetterRF}, there is also a line of work \cite{Lanctot2009MonteCS, Farina2020StochasticRM, Bai2022NearOptimalLO} on \emph{monte-carlo counterfactual regret minimisation}. However, whilst achieving the goal of finding a Nash equilibrium (for a two player zero-sum game), these works fail to give a bound on the true static regret, as the game is played with a policy that is different from the policy that the ``regret'' is measured against.

The work \cite{Noarov2023HighDimensionalPF} considered the notion of subsequence regret, in which switching regret can be seen as a special case. However, their algorithm is full information (in that it must see the entirety of the environment at the end of each trial) and their external regret bound is at least $\tilde{\mathcal{O}}(\tnma\sqrt{T})$ which is high - especially for a full information algorithm. The per-trial time complexity of their algorithm is also at least $\mathcal{O}(T^2)$.

Related to the extensive-form bandit problem is the problem of \emph{reinforcement learning} (where transitions are stochastic rather than adversarial). The work \cite{Wei2021NonstationaryRL} studied reinforcement learning where the transition probabilities change gradually over time.

The development and analysis of \alg\ was inspired by the works \cite{Pasteris2023NearestNW, Herbster1995TrackingTB, Bai2022NearOptimalLO, Pasteris2026DifferentialPI}.

\nc{\arbs}{\mathcal{B}}

\subsection{Definitions}
Let $\mathbb{N}$ be the set of natural numbers excluding $0$. For all $i\in\mathbb{N}$ let $[i]:=\{j\in\mathbb{N}\,|\,j\leq i\}$. Given a predicate $P$ let $\indi{P}$ be equal to $1$ if $P$ is true and equal to $0$ otherwise.

\section{The Game}

For this paper we need not fully define an imperfect information extensive-form game as we only need to focus on the leaner's infosets and actions and not those of its opponents. In this one-sided view of the game we have a rooted tree whose nodes are either an infoset of the learner, an action of the learner, or a terminal node (a.k.a. \emph{leaf}). The children of each infoset are the set of actions that are avaliable at that infoset and the children of each action are the possible leaves or infosets (of the learner) that can be encountered after that action is taken.

Formally, we have a rooted tree and a function $\losf:\tns\rightarrow[0,1]$ where $\tns$ is the set of leaves of the tree. Let $\ver$ be the set of nodes of the tree and let $\rot$ be the root of the tree. Given a node $\anod\in\ver$, let $\ch{\anod}$ be the set of its children and, given $\anod\neq\rot$, let $\pnt{\anod}$ be its parent. The set $\ver\setminus\tns$, of internal nodes of the tree, is partitioned into two sets $\lns$ (the set of the learner's \emph{infosets}) and $\nns$ (the set of the learner's \emph{actions}) satisfying the following rules:
\begin{itemize}
    \item $\rot\in\lns$.
    \item For all $\anod\in\lns$ we have $\ch{\anod}\subseteq\nns$.
    \item For all $\ac\in\nns$ we have $\ch{\ac}\subseteq\lns\cup\tns$.
\end{itemize}
Let $\depth$ be the maximum number of nodes in $\nns$ in any root-to-leaf path. Let:
\be
\tnma:=|\nns|~~~~~,~~~~~\nma:=\max_{\anod\in\lns}|\ch{\anod}|\,.
\ee

 An \emph{environment} is defined as a function $\rf:\nns\rightarrow\ver$ with $\rf(\ac)\in\ch{\ac}$ for all $\ac\in\nns$. Let $\rfs$ be the set of all possible environments. We note that, in this paper, we will, without loss of generality, consider only deterministic environments, as to handle stochastic environments one simply draws an environment from a probability distribution (so our regret bound will hold for stochastic environments as well).

Given some $\rf\in\rfs$, when the learner plays the game against $\rf$ a root-to-leaf path is traversed as follows. We start at node $\rot$ and:
\begin{itemize}
\item When we are at a node $\anod\in\lns$, the learner observes $\anod$ and must choose an action $\ac\in\ch{\anod}$ to take. We move next to node $\ac$.
\item When at a node $\ac\in\nns$\,, we move next to node $\rf(\ac)$.
\item When at a node $\anod\in\tns$ we terminate and the learner observes and incurs \emph{loss} $\losf(\anod)$
\end{itemize}

A \emph{policy} is defined as a function $\pol:\nns\rightarrow[0,1]$ such that for all $\anod\in\lns$ we have:
\be
\sum_{\ac\in\ch{\anod}}\pol(\ac)=1\,.
\ee
Let $\pols$ be the set of all policies. Given $\pol\in\pols$, when playing the game against an environment, we say that the learner plays according to policy $\pol$ if and only if, when at a node $\anod\in\lns$ it draws an action $\ac\in\ch{\anod}$ independently with probability $\pol(\ac)$ and takes action $\ac$. Given $\pol\in\pols$ and $\rf\in\rfs$ we define $\loss{\pol}{\rf}$ to be the expected loss of the learner if it were to play the game against $\rf$ according to policy $\pol$.

\section{The Problem and Result}

Our problem consists of $T$ trials where on each trial we have an unknown (to the learner) environment $\rft{t}$ fixed a-priori. The learner knows the tree a-priori but not necessarily the function $\losf$. For each trial $t\in[T]$ in turn the learner plays the game against $\rft{t}$. Let $\lost{t}$ be the loss incurred by the learner on trial $t$. The aim of the learner is to minimise the cumulative loss incurred.

In this paper we give an algorithm \alg\ for the learner that takes a parameter $\lrh>2/\sqrt{T}$. In order to present our main theorem we make the following definitions.

Given a sequence of policies $\cpols\in\pols^T$ we define the \emph{regret} with respect to $\cpols$ as:
\be
\reg{\cpols}:=\sum_{t\in[T]}\lost{t}-\sum_{t\in[T]}\loss{\cpolt{t}}{\rft{t}}
\ee
which is the difference between the cumulative loss of the learner and that which would have been expected if, on each trial $t\in[T]$, it had played according to policy $\cpolt{t}$. 

Given a sequence of policies $\cpols\in\pols^T$ we define:
\be
\nsws{\cpols}:=1+|\{t\in[T-1]\,|\,\cpolt{t+1}\neq\cpolt{t}\}|
\ee
which is the number of times the policy changes throughout the sequence.

We now give the following theorem about \alg.

\begin{theorem}\label{mainth}
For any sequence of policies $\cpols\in\pols^T$, \alg\ achieves:
\be
\mathbb{E}[\reg{\cpols}]\in\mathcal{O}\left(\left(\frac{1}{\lrh}+\lrh\nsws{\cpols}\right)\sqrt{\depth \tnma T\ln(\lrh^2T)}\right)
\ee
where the expectation is over the randomisation in \alg. \alg\ has a per trial time complexity of $\mathcal{O}(\depth \nma)$ and has an initialisation time and space complexity of $\mathcal{O}(\tnma)$.
\end{theorem}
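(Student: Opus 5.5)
The plan is to first fix the algorithm concretely and then prove the regret bound through a per-infoset decomposition of the loss.

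\textbf{The algorithm.} \alg\ maintains, at every infoset $\anod\in\lns$, a distribution $w_t(\cdot)$ over $\ch{\anod}$, initialised uniformly; this takes $\mathcal{O}(\tnma)$ time and space. On trial $t$ the learner plays according to the policy $\pol_t$ given by these distributions. At the end of the trial it observes the traversed path $\anod_1,\ac_1,\anod_2,\ac_2,\dots,\anod_h$ with $h\le\depth$, ending at a leaf with loss $\lost{t}$. Let $q_t(\ac)$ be the reach probability of $\ac$ under $\pol_t$, i.e.\ the product of $\pol_t$ over the learner's actions on the root-to-$\ac$ path. Each action on the path receives an importance-weighted loss estimate $\hat\ell_t(\ac_i)=\lost{t}/q_t(\ac_i)$. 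These estimates are fed into a dilated exponential-weights (\textsc{BalancedOMD}-style) update with a learning rate $\eta_\anod$ depending on the depth/subtree of $\anod$. After that, the \textsc{FixedShare} mixing $w\leftarrow(1-\alpha)w+\alpha/|\ch{\anod}|$ is applied at the visited infosets only. Unvisited infosets keep their mixing pending, and it is resolved lazily the next time they are visited; this is possible because repeated fixed-share steps with no loss compose in closed form, with factor $(1-\alpha)^k$. Only the $\le\depth$ visited infosets each touch $\le\nma$ actions, which gives the $\mathcal{O}(\depth\nma)$ per-trial time.

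\textbf{Regret decomposition.} Write $\loss{\pol}{\rf}$ as a linear function of the sequence-form (reach) vector of $\pol$. Use the standard performance-difference identity for perfect-recall trees: $\loss{\pol_t}{\rft{t}}-\loss{\cpolt{t}}{\rft{t}}$ equals a sum, over infosets $\anod$ weighted by the comparator's reach probability of $\anod$, of a local linear regret $\langle \pol_t(\cdot\mid\anod)-\cpolt{t}(\cdot\mid\anod),\,c_t^\anod\rangle$. Here $c_t^\anod(\ac)$ is the counterfactual loss of continuing with $\pol_t$ after $\ac$. The estimator $\hat\ell_t$, aggregated down the path, gives unbiased estimates of these reach-weighted counterfactual losses. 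So the problem becomes a collection of local bandit problems with switching comparators, weighted by comparator reach probabilities.

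\textbf{Local switching analysis.} At each infoset I would use the \textsc{FixedShare}/\textsc{Exp3.S} potential argument with relative entropy to the comparator's local distribution. Between switches the entropy telescopes. At a switch the mixing bounds the entropy by $\ln(|\ch{\anod}|/\alpha)$, which yields $\ln(\lrh^2T)$ with $\alpha\approx 1/(\lrh^2T)$. There is also a $T\alpha$ mixing cost per infoset. The second-order (variance) term is $\eta\,\mathbb{E}[\sum_\ac \pol_t(\ac)\hat\ell_t(\ac)^2]$, and the balanced/dilated choice of learning rates bounds its reach-weighted total by $\mathcal{O}(\eta\depth\tnma)$ per trial, as in \textsc{BalancedOMD}. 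Summing over infosets with comparator reach weights, the switching cost sums to at most $\nsws{\cpols}\cdot\depth\ln(\lrh^2T)/\eta$-type terms. Choosing $\eta\propto \lrh\sqrt{\ln(\lrh^2T)/(\depth\tnma T)}$ then gives $\bigl(1/\lrh+\lrh\nsws{\cpols}\bigr)\sqrt{\depth\tnma T\ln(\lrh^2T)}$ up to constants. The condition $\lrh>2/\sqrt T$ ensures the logarithm is positive and that $\alpha<1$.

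\textbf{Main obstacle.} The hard step is reconciling the comparator's \emph{time-varying} reach weights with the per-infoset entropy telescoping. When the comparator switches, the weight on infoset $\anod$ changes. The telescoping of $\mathrm{KL}$ terms multiplied by reach probabilities is then broken, and infosets not visited get no fixed-share updates. I would try to handle this by a global potential: the dilated KL between the comparator's sequence-form vector and the algorithm's. I would bound its increase at a switch by $\depth\ln(\nma/\alpha)$ times the comparator's total reach mass. I would also show that lazily applied mixing at unvisited infosets only decreases this potential's worst case. The remaining risk is a hidden dependence on $\min q_t$ in the importance weights, which I would control by the fixed-share floor $\alpha/\nma$ on probabilities.
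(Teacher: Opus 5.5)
Your fallback to a single global dilated potential points the right way, but the quantitative core of the proof is missing, and the accounting you do sketch cannot give the stated rate. You charge about $\depth\ln(\nma/\alpha)$ per switch and $\mathcal{O}(\lr\depth\tnma)$ per trial for the second-order term. Take the paper's tuning $\lr=\frac{1}{\lrh}\sqrt{2\ln(\lrh^2T)/(\depth\tnma T)}$. (Yours, $\lr\propto\lrh\sqrt{\cdot}$, is inverted: it makes the mixing cost $T\alpha/\lr=1/(\lrh^2\lr)$ scale like $\lrh^{-3}$.) With it, your switching term is of order $\lrh\nsws{\cpols}\depth\sqrt{\depth\tnma T\ln(\lrh^2T)}$, a factor $\depth$ too large, and raising $\lr$ to compensate inflates the second-order term by the same factor.

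The missing idea is a specific normalisation. With $\msn(\cdot)$ counting descendant actions, set $\ime(\ac)=\msn(\ac)\ime(\pnt{\ac})$ and $\ime(\anod)=\ime(\pnt{\anod})/\msn(\anod)$. Then $\sum_{\ac\in\rac{t}}1/\ime(\ac)=1$ for every pure strategy (Lemma \ref{ooimestolem}), and $\sum_{\ac\in\racr{t}}\ime(\ac)\le\depth\tnma$ for every environment (Lemma \ref{soimelem}). The paper first replaces the mixed comparators by pure strategies with no more switches and no larger loss (Lemma \ref{poltostlem}). Instead of telescoping KL divergences, it tracks for each action a product lower bound $\prod_t\sht{t}{\ac}\wgt{t}{\ac}\mgt{t}{\ac}\le1$ (Lemma \ref{lesstolem}) and weights its logarithm by $1/\ime(\ac)$. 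An action entering the comparator's reachable set is charged $\ln(\nma/\shp)/\ime(\ac)$ through the floor $\polt{t}(\ac)\ge\shp/\nma$. Because the weights sum to one, a whole switch costs only $\ln(\nma/\shp)$, and all mixing together costs $T\ln\frac{1}{1-\shp}$ rather than $T\alpha$ per infoset. Unvisited infosets need no lazy mixing, since there $\polt{t+1}(\ac)=\polt{t}(\ac)\ge(1-\shp)\polt{t}(\ac)$. This removes the time-varying-weights obstacle you flag, and it is an argument about \alg\ as actually specified rather than a modified algorithm.

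Second, the variance step fails as you set it up. Suppose each infoset runs a local Exp3.S analysis on the estimates $\lost{t}/q_t(\ac)$. Let $q_t(\anod)$ be the probability under $\polt{t}$ of the learner's own actions leading to $\anod$. At a reached infoset $\anod$, the conditional expectation of $\sum_{\ac}\polt{t}(\ac)\hat\ell_t(\ac)^2$ can be as large as $|\ch{\anod}|/q_t(\anod)$. Weighting by the comparator's reach leaves a ratio (comparator reach)$/q_t(\anod)$ that no fixed $\eta_\anod$ controls, and the floor $\alpha/\nma$ only gives $q_t(\anod)\ge(\alpha/\nma)^{\depth}$, which is exponential in $\depth$.

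In the paper the second-order term is never weighted by the comparator. The update at $\vet{t}{i}$ is driven by the normalisers $\pst{t}{i}$ propagated up from the leaf, not by counterfactual losses of $\polt{t}$. An exact telescoping along the comparator's path (Lemma \ref{cancellem}) turns $\sum_\ac\ln(\mgt{t}{\ac})/\ime(\ac)$ into $-\lr$ times an unbiased estimate of $\loss{\lst{t}}{\rft{t}}$, minus $\ln\pst{t}{0}$. The bound on $\ln\pst{t}{0}$ (Lemma \ref{pstzlem}) involves only $\ime(\act{t}{i})/\prod_{j\le i}\polt{t}(\act{t}{j})$, a single inverse probability of the learner's own path. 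That factor cancels exactly in expectation, giving $\frac{\lr^2}{2}\sum_{\ac\in\racr{t}}\ime(\ac)\le\frac{\lr^2}{2}\depth\tnma$ (Lemma \ref{elpstblem}) with no dependence on $\min q_t$.
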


\begin{proof}
See Section \ref{anasec}.
\end{proof}

\nc{\msn}{n}
\nc{\ime}{\beta}
\nc{\acs}{\nns}
\nc{\polt}[1]{\pi_{#1}}
\nc{\lr}{\eta}
\nc{\shp}{\phi}
\nc{\spolt}[1]{\tilde{\pi}_{#1}}
\nc{\plt}[1]{d_{#1}}
\nc{\vet}[2]{x_{#1,#2}}
\nc{\act}[2]{b_{#1,#2}}
\nc{\pst}[2]{\psi_{#1,#2}}
\nc{\mst}[1]{\polt{#1}}
\nc{\msh}[1]{\pi'_{#1}}
\nc{\mss}{\pols}
\nc{\lut}[1]{\tau(#1)}

\section{The Algorithm}

To initialise the algorithm we first construct a function $\msn:\lns\cup\acs\rightarrow\mathbb{N}$ recursively (up the tree) as follows:
\begin{itemize}
\item For all $\ac\in\acs$ we have:
\be
\msn(\ac):=1+\sum_{\anod\in\ch{\ac}\cap\lns}\msn(\anod)\,.
\ee
\item For all $\anod\in\lns$ we have:
\be
\msn(\anod):=\sum_{\ac\in\ch{\anod}}\msn(\ac)\,.
\ee
\end{itemize}
We note that for all $\anod\in\lns\cup\acs$ we have that $\msn(\anod)$ is the number of nodes in $\acs$ that are descendants of $\anod$.

We then construct a function $\ime:\lns\cup\acs\rightarrow\mathbb{R}^+$ recursively (down the tree) as follows:
\begin{itemize}
\item $\ime(\rot):=1$
\item For all $\ac\in\acs$ we have:
\be
\ime(\ac):=\msn(\ac)\ime(\pnt{\ac})\,.
\ee
\item For all $\anod\in\lns\setminus\{\rot\}$ we have:
\be
\ime(\anod):=\frac{\ime(\pnt{\anod})}{\msn(\anod)}\,.
\ee
\end{itemize}

We then define:
\be
\lr:=\frac{1}{\lrh}\sqrt{\frac{2\ln(\lrh^2T)}{\depth \msn(\rot)T}}~~~~~,~~~~~\shp := \frac{1}{\lrh^2 T}\,.
\ee

\alg\ maintains a dynamic (in that it changes from trial to trial) policy. Let $\polt{t}\in\pols$ be the value of this policy at the start of trial $t$. $\polt{1}$ is defined so that for all $\anod\in\lns$ and all $\ac\in\ch{\anod}$ we have:
\be
\polt{1}(\ac):=\frac{1}{|\ch{\anod}|}\,.
\ee
On a trial $t\in[T]$ the learner plays the game (against $\rft{t}$) according to policy $\polt{t}$. Let $\plt{t}$ be the number of nodes in $\lns$ that were encountered during the play of the game on trial $t$. For all $i\in[\plt{t}]$ let $\vet{t}{i}\in\lns$ be the $i^{\text{th}}$ node in $\lns$ that was encountered during the play of the game on trial $t$ and let $\act{t}{i}\in\ch{\vet{t}{i}}$ be the action that was taken by the learner at $\vet{t}{i}$ on trial $t$. Recall that $\lost{t}$ is the loss incurred by the learner on trial $t$.

We now describe how $\polt{t}$ is updated to $\polt{t+1}$ at the end of trial $t$. First define:
\be
\pst{t}{\plt{t}}:=\exp\left(\frac{-\lr\lost{t}}{\prod_{i\in[\plt{t}]}\polt{t}(\act{t}{i})}\right)\,.
\ee
For all $i\in[\plt{t}]$, once $\pst{t}{i}$ has been defined, we define:
\be
\pst{t}{i-1}:=\left(1-\left(1-\pst{t}{i}^{\ime(\act{t}{i})}\right)\mst{t}(\act{t}{i})\right)^{1/\ime(\act{t}{i})}\,.
\ee
For all $i\in[\plt{t}]$ and all $\ac\in\ch{\vet{t}{i}}$ we then define:
\be
\polt{t+1}(\ac):=\frac{\shp}{|\ch{\vet{t}{i}}|}+\frac{(1-\shp)\polt{t}(\ac)}{\pst{t}{i-1}^{\ime(\act{t}{i})}}\left(\indi{\ac=\act{t}{i}}\pst{t}{i}^{\ime(\act{t}{i})}+\indi{\ac\neq\act{t}{i}}\right)
\ee
and for all $\anod\in\lns\setminus\{\vet{t}{i}\,|\,i\in[\plt{t}]\}$ and $\ac\in\ch{\anod}$ we maintain:
\be
\polt{t+1}(\ac):=\mst{t}(\ac)\,.
\ee
This completes the description of \alg.

Of course, we must prove that, for all $t\in[T]$, $\polt{t}$ is indeed a policy. This is confirmed by the following theorem.

\begin{theorem}
For all $t\in[T+1]$ we have $\polt{t}\in\pols$.
\end{theorem}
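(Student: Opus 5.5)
We argue by induction on $t$. For $t=1$ the uniform initialisation gives nonnegative values summing to $1$ at every infoset, so $\polt{1}\in\pols$. Now assume $\polt{t}\in\pols$; we show $\polt{t+1}\in\pols$. At infosets not visited on trial $t$ the values are copied from $\polt{t}$, so there is nothing to check. The work is at the visited infosets $\vet{t}{i}$, $i\in[\plt{t}]$, and it is cleanest to define the auxiliary function
\be
\msh{t}(\ac):=\frac{\polt{t}(\ac)}{\pst{t}{i-1}^{\ime(\act{t}{i})}}\left(\indi{\ac=\act{t}{i}}\pst{t}{i}^{\ime(\act{t}{i})}+\indi{\ac\neq\act{t}{i}}\right)
\ee
for $\ac\in\ch{\vet{t}{i}}$. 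Then $\polt{t+1}(\ac)=\shp/|\ch{\vet{t}{i}}|+(1-\shp)\msh{t}(\ac)$, which is a convex combination of the uniform distribution on $\ch{\vet{t}{i}}$ and $\msh{t}$ restricted to this infoset, because $\shp=1/(\lrh^2T)\in(0,1)$ by the standing assumption $\lrh>2/\sqrt{T}$. So it suffices to show that $\msh{t}$ is nonnegative and sums to $1$ over $\ch{\vet{t}{i}}$.

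First I would establish by downward induction on $i$ (from $i=\plt{t}$ to $i=0$) that every $\pst{t}{i}\in(0,1]$. The base case holds since $\pst{t}{\plt{t}}=\exp(-x)$ with $x\ge 0$: the losses lie in $[0,1]$, $\lr>0$, and the denominator is a product of positive probabilities. The denominator is indeed positive because each taken action was drawn with positive probability. For the inductive step, write $\ime:=\ime(\act{t}{i})>0$ and $p:=\polt{t}(\act{t}{i})\in(0,1]$. If $\pst{t}{i}\in(0,1]$ then $\pst{t}{i}^{\ime}\in(0,1]$, so $1-(1-\pst{t}{i}^{\ime})p\in[\pst{t}{i}^{\ime},1]\subseteq(0,1]$, and raising to the power $1/\ime$ keeps it in $(0,1]$. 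This shows that every quantity involved is well defined and positive. In particular the division by $\pst{t}{i-1}^{\ime}$ is legitimate and $\msh{t}\ge 0$.

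The key identity is that the normalisation is exact by construction. Since $\pst{t}{i-1}^{\ime}=1-(1-\pst{t}{i}^{\ime})p$, and since $\sum_{\ac\in\ch{\vet{t}{i}}}\polt{t}(\ac)=1$ by the induction hypothesis,
\be
\sum_{\ac\in\ch{\vet{t}{i}}}\polt{t}(\ac)\left(\indi{\ac=\act{t}{i}}\pst{t}{i}^{\ime}+\indi{\ac\neq\act{t}{i}}\right)=p\,\pst{t}{i}^{\ime}+(1-p)=\pst{t}{i-1}^{\ime}\,.
\ee
Dividing by $\pst{t}{i-1}^{\ime}$ gives $\sum_{\ac}\msh{t}(\ac)=1$. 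Consequently the values $\polt{t+1}(\ac)$ are nonnegative (indeed at least $\shp/|\ch{\vet{t}{i}}|>0$) and sum to $1$ at every infoset, so $\polt{t+1}\in\pols$.

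One point needs care. The infosets $\vet{t}{i}$ must be distinct so that $\polt{t+1}$ is well defined. This follows because they lie on a single root-to-leaf path of a tree. The only genuine obstacle is the positivity argument for $\pst{t}{i}$, which licenses the division; the rest is bookkeeping. Positivity of $\polt{t}$ on the sampled actions is not an issue: it holds for $t=1$ by uniform initialisation, and for every later $t$ because of the $\shp$ mixing term.
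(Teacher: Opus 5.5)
Your proof is correct and follows the same route as the paper: induction on $t$, copying at unvisited infosets, and the exact normalisation identity $\sum_{\ac\in\ch{\vet{t}{i}}}\polt{t}(\ac)\bigl(\indi{\ac=\act{t}{i}}\pst{t}{i}^{\ime(\act{t}{i})}+\indi{\ac\neq\act{t}{i}}\bigr)=\pst{t}{i-1}^{\ime(\act{t}{i})}$, which gives $\shp+(1-\shp)=1$. Your backward induction shows $\pst{t}{i}\in(0,1]$, whereas the paper's proof shows only $\pst{t}{i}\geq 0$ here and defers strict positivity to Lemma~\ref{bboolem}; strict positivity is exactly what licenses the division, so this is a small improvement. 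One minor note: at unvisited infosets, strict positivity of $\polt{t+1}$ is inherited by copying rather than coming from the $\shp$ term.
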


\begin{proof}
We prove by induction over $t$. Since it is clear that $\polt{1}\in\pols$ all we need to do is prove that, for any $t\in[T]$ with $\polt{t}\in\pols$, we have $\polt{t+1}\in\pols$ also. So assume we have $t\in[T]$ with $\polt{t}\in\pols$.

Using the fact that $\polt{t}(\ac)\in[0,1]$ for all $\ac\in\acs$, by a simple backward induction over $j$ we see that for all $j\in[\plt{t}]$ we have $\pst{t}{j}\geq0$ and hence that $\pst{t}{j}^{\ime(\act{t}{j})}\geq0$. Hence, for all $\ac\in\acs$, we have, since $\polt{t}(\ac)\geq 0$, that $\polt{t+1}(\ac)\geq0$.

Now take any $i\in[\plt{t}]$. Since $\polt{t}\in\pols$ we have:
\be
\sum_{\ac\in\ch{\vet{t}{i}}}\polt{t}(\ac)=1
\ee
so that:
\begin{align*}
\sum_{\ac\in\ch{\vet{t}{i}}}\polt{t}(\ac)\left(\indi{\ac=\act{t}{i}}\pst{t}{i}^{\ime(\act{t}{i})}+\indi{\ac\neq\act{t}{i}}\right)&=\polt{t}(\act{t}{i})\pst{t}{i}^{\ime(\act{t}{i})}+\sum_{\ac\in\ch{\vet{t}{i}}}\polt{t}(\ac)\indi{\ac\neq\act{t}{i}}\\
&=\polt{t}(\act{t}{i})\pst{t}{i}^{\ime(\act{t}{i})}+\sum_{\ac\in\ch{\vet{t}{i}}}\polt{t}(\ac)-\polt{t}(\act{t}{i})\\
&=\polt{t}(\act{t}{i})\pst{t}{i}^{\ime(\act{t}{i})}+1-\polt{t}(\act{t}{i})\\
&=1-\left(1-\pst{t}{i}^{\ime(\act{t}{i})}\right)\mst{t}(\act{t}{i})\\
&=\pst{t}{i-1}^{\ime(\act{t}{i})}
\end{align*}
and hence:
\begin{align*}
\sum_{\ac\in\ch{\vet{t}{i}}}\polt{t+1}(\ac)&=\sum_{\ac\in\ch{\vet{t}{i}}}\frac{\shp}{|\ch{\vet{t}{i}}|}+\frac{1-\shp}{\pst{t}{i-1}^{\ime(\act{t}{i})}}\sum_{\ac\in\ch{\vet{t}{i}}}\polt{t}(\ac)\left(\indi{\ac=\act{t}{i}}\pst{t}{i}^{\ime(\act{t}{i})}+\indi{\ac\neq\act{t}{i}}\right)\\
&=\shp+(1-\shp)\\
&=1\,.
\end{align*}
For all $\anod\in\lns\setminus\{\vet{t}{i}\,|\,i\in[\plt{t}]\}$ we have, since $\polt{t}\in\pols$, that:
\begin{align*}
\sum_{\ac\in\ch{\anod}}\polt{t+1}(\ac)&=\sum_{\ac\in\ch{\anod}}\polt{t}(\ac)\\
&=1\,.
\end{align*}
We have now shown that for all $\anod\in\lns$ we have:
\be
\sum_{\ac\in\ch{\anod}}\polt{t+1}(\ac)=1
\ee
so since, by above, we have that $\polt{t+1}(\ac)>0$ for all $\ac\in\acs$, we have that $\polt{t+1}\in\pols$ which completes the inductive proof.
\end{proof}

\section{Analysis}\label{anasec}

\nc{\rac}[1]{\mathcal{A}^\dag_{#1}}
\nc{\wgt}[2]{\omega_{#1}(#2)}
\nc{\acu}{\mathcal{U}}

We will now prove Theorem \ref{mainth}.

For all $t\in[T]$ and all $i\in[\depth]\setminus[\plt{t}]$ we define $\vet{t}{i}$ and $\act{t}{i}$ to be equal to some mathematical object not contained in $\ver$.  

A \emph{(pure) strategy} is defined as a function $\st:\lns\rightarrow\acs$ with $\st(\anod)\in\ch{\anod}$ for all $\anod\in\lns$. Let $\sts$ be the set of all strategies. Given some $\st\in\sts$, when playing the game against an environment, we say that the learner plays according to strategy $\st$ if and only if, when at a node $\anod\in\lns$, it takes action $\st(\anod)$. Given some $\st\in\sts$ and $\rf\in\rfs$, let $\loss{\st}{\rf}$ be the loss incurred by the learner if it were to play the game against $\rf$ according to strategy $\st$.

Given a sequence of strategies $\lsts\in\sts^T$ we define:
\be
\nsws{\lsts}=1+|\{t\in[T-1]\,|\,\lst{t+1}\neq\lst{t}\}\,.
\ee

\begin{lemma}\label{poltostlem}
For any sequence of policies $\cpols\in\pols^T$ there exists a sequence of strategies $\lsts\in\sts^T$ in which:
\be
\nsws{\lsts}\leq\nsws{\cpols}
\ee
and:
\be
\sum_{t\in[T]}\loss{\lst{t}}{\rft{t}}\leq\sum_{t\in[T]}\loss{\cpolt{t}}{\rft{t}}\,.
\ee
\end{lemma}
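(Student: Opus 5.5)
The plan is to work block by block. The comparator sequence $\cpols$ splits into maximal segments of consecutive trials on which the policy is constant, and there are exactly $\nsws{\cpols}$ of them. On each segment I will replace the constant policy by a single pure strategy whose cumulative loss over that segment is no larger. Setting $\lst{t}$ equal to this strategy for every $t$ in the segment gives a strategy sequence that can only change at segment boundaries, so $\nsws{\lsts}\leq\nsws{\cpols}$. Summing the per-segment inequalities over segments gives the loss inequality.

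The key step is this: for a fixed policy $\pol$ and a fixed multiset of environments $\rft{t}$, $t\in I$ (where $I$ is the segment), there is some $\st\in\sts$ with
\be
\sum_{t\in I}\loss{\st}{\rft{t}}\leq\sum_{t\in I}\loss{\pol}{\rft{t}}\,.
\ee

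\textbf{Approach 1 (product measure).} Playing according to $\pol$ is the same as first drawing, independently for each $\anod\in\lns$, an action $\st(\anod)\in\ch{\anod}$ with probability $\pol(\st(\anod))$, and then playing the pure strategy $\st$. This holds because in a tree each infoset is visited at most once on any play, so the lazily drawn actions coincide in distribution with the pre-drawn ones. Hence $\loss{\pol}{\rf}=\sum_{\st\in\sts}q(\st)\loss{\st}{\rf}$, where $q(\st):=\prod_{\anod\in\lns}\pol(\st(\anod))$ is a probability distribution over the finite set $\sts$. Summing over $t\in I$ and swapping sums expresses $\sum_{t\in I}\loss{\pol}{\rft{t}}$ as a $q$-average of $\sum_{t\in I}\loss{\st}{\rft{t}}$. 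Some $\st$ in the support must therefore achieve at most this average, which is the desired inequality.

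\textbf{Approach 2 (backward induction).} One could instead make $\pol$ deterministic one infoset at a time. The total segment loss is linear in the distribution $\pol(\cdot)$ restricted to $\ch{\anod}$ for any single $\anod$, with the other infosets held fixed, so it is minimised at a vertex. This alternative avoids the measure-theoretic identification but is messier to write.

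\textbf{Main obstacle.} The only real point to check is the justification of $\loss{\pol}{\rf}=\sum_{\st}q(\st)\loss{\st}{\rf}$. I will argue it by showing that both sides assign the same probability to each root-to-leaf path. Fix a path, and let $\anod_1,\dots,\anod_k$ be the learner infosets on it with chosen actions $\ac_1,\dots,\ac_k$. Under $\pol$ the path has probability $\prod_{j\leq k}\pol(\ac_j)$. Under $q$ the path is followed exactly when $\st(\anod_j)=\ac_j$ for all $j$, because the environment transitions are deterministic and the $\anod_j$ are distinct nodes of the tree. Since $q$ is a product measure over infosets, the coordinates of $\st$ outside $\{\anod_1,\dots,\anod_k\}$ marginalise out, so this event has probability $\prod_{j\leq k}\pol(\ac_j)$ as well. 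Everything else is bookkeeping of segments.
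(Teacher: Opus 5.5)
Your Approach 1 is correct and is essentially the paper's proof: split $\vartheta$ into its $K(\vartheta)$ constant blocks, use the product distribution $q(\sigma)=\prod_{v}\pi(\sigma(v))$ over pure strategies to get $\Lambda(\pi,\mu)=\sum_{\sigma}q(\sigma)\Lambda(\sigma,\mu)$, and pick a pure strategy whose block loss is at most this average (the paper takes the block-wise argmin over all of $\mathcal{S}$, which amounts to the same thing). The only difference is that the paper simply asserts the mixture identity, whereas your path-probability argument actually justifies it.
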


\nc{\ssps}{\vartheta^\dag}
\nc{\ssp}[1]{\vartheta^\dag_{#1}}
\nc{\swp}[1]{\tau_{#1}}
\nc{\swps}{\tau}
\nc{\sss}[1]{\sigma^\dag_{#1}}
\nc{\astr}{\sigma'}
\nc{\pra}[2]{z_{#1}(#2)}
\nc{\arf}{\mu'}
\nc{\sht}[2]{\chi_{#1}(#2)}

\begin{proof}
Note first that there exists a sequence of policies $\ssps\in\pols^{\nsws{\cpols}}$ and a sequence $\swps\in[T+1]^{\nsws{\cpols}+1}$ such that:
\begin{itemize}
\item $\swp{1}=1$ and $\swp{\nsws{\cpols}+1}=T+1$.
\item For all $i\in[\nsws{\cpols}]$ we have $\swp{i+1}>\swp{i}$.
\item For all $i\in[\nsws{\cpols}]$ and all $t\in[\swp{i},\swp{i+1}-1]\cap\mathbb{N}$ we have $\cpolt{t}=\ssp{i}$.
\end{itemize}
For all $i\in[\nsws{\cpols}]$ define:
\be
\sss{i}:=\operatorname{argmin}_{\astr\in\sts}\sum_{t=\swp{i}}^{\swp{i+1}-1}\loss{\astr}{\rft{t}}
\ee
where ties are broken arbitrarily. For all $i\in[\nsws{\cpols}]$ and $t\in[\swp{i},\swp{i+1}-1]\cap\mathbb{N}$ we define $\lst{t}:=\sss{i}$. Clearly we have $\nsws{\lsts}\leq\nsws{\cpols}$.

For all $i\in[\nsws{\cpols}]$ and $\astr\in\sts$ define:
\be
\pra{i}{\astr}:=\prod_{\anod\in\lns}\ssp{i}(\astr(\anod))\,.
\ee
Note that for all $i\in[\nsws{\cpols}]$ we have:
\be
\sum_{\astr\in\sts}\pra{i}{\astr}=1
\ee
and for all $\arf\in\rfs$ we have:
\be
\loss{\ssp{i}}{\arf}=\sum_{\astr\in\sts}\pra{i}{\astr}\loss{\astr}{\arf}
\ee
so that:
\begin{align*}
\sum_{t=\swp{i}}^{\swp{i+1}-1}\loss{\cpolt{t}}{\rft{t}}&=\sum_{t=\swp{i}}^{\swp{i+1}-1}\loss{\ssp{i}}{\rft{t}}\\
&=\sum_{\astr\in\sts}\pra{i}{\astr}\sum_{t=\swp{i}}^{\swp{i+1}-1}\loss{\astr}{\rft{t}}\\
&\geq\min_{\astr\in\sts}\sum_{t=\swp{i}}^{\swp{i+1}-1}\loss{\astr}{\rft{t}}\\
&=\sum_{t=\swp{i}}^{\swp{i+1}-1}\loss{\sss{i}}{\rft{t}}\\
&=\sum_{t=\swp{i}}^{\swp{i+1}-1}\loss{\lst{t}}{\rft{t}}\,.
\end{align*}
Summing this inequality over $i\in[\nsws{\cpols}]$ gives us the result.
\end{proof}

Due to Lemma \ref{poltostlem} we will, from here on, fix a sequence of strategies $\lsts\in\sts^T$. For each $t\in[T]$ we define $\rac{t}$ to be the set of nodes in $\acs$ that can possibly be encountered if the learner were to play the game against any environment according to strategy $\lst{t}$. Formally, $\rac{t}$ is defined as the minimal subset of $\acs$ in which:
\begin{itemize}
\item $\lst{t}(\rot)\in\rac{t}$\,.
\item For all $\ac\in\rac{t}$ and for all $\anod\in\ch{\ac}\cap\lns$ we have $\lst{t}(\anod)\in\rac{t}$\,.
\end{itemize}
We also define:
\be
\rac{0}:=\emptyset\,.
\ee
Given $t\in[T]$ and $\ac\in\acs$ we define $\mgp{t}{\ac}$ as follows:
\begin{itemize}
\item If $\ac\in\ch{\vet{t}{i}}$ for some $i\in[\plt{t}]$ then:
\be
\mgp{t}{\ac}:=\frac{1}{\pst{t}{i-1}^{\ime(\act{t}{i})}}\left(\indi{\ac=\act{t}{i}}\pst{t}{i}^{\ime(\act{t}{i})}+\indi{\ac\neq\act{t}{i}}\right)\,.
\ee
\item If there does not exist $i\in[\plt{t}]$ with $\ac\in\ch{\vet{t}{i}}$ then:
\be
\mgp{t}{\ac}:=1\,.
\ee
\end{itemize}
Given $t\in[T]$ and $\ac\in\acs$ define:
\be
\mgt{t}{\ac}:=\indi{\ac\in\rac{t}}\mgp{t}{\ac}+\indi{\ac\notin\rac{t}}
\ee
and:
\be
\sht{t}{\ac}:=\indi{\ac\in\rac{t}}(1-\shp)+\indi{\ac\notin\rac{t}}
\ee
and:
\be
\wgt{t}{\ac}:=\indi{\ac\notin\rac{t-1}\wedge\ac\in\rac{t}}\frac{\shp}{\nma}+\indi{\ac\in\rac{t-1}\vee\ac\notin\rac{t}}\,.
\ee

\begin{lemma}\label{bboolem}
For all $t\in[T]$ and $i\in[\plt{t}]\cup\{0\}$ we have:
\be
\pst{t}{i}\in(0,1]\,.
\ee
\end{lemma}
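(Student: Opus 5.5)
We argue by backward induction on $i$, from $i=\plt{t}$ down to $i=0$, with $t\in[T]$ fixed. Throughout we use the preceding theorem, which gives $\polt{t}\in\pols$, so that $\polt{t}(\ac)\in[0,1]$ for every $\ac\in\acs$. We also use that $\ime$ takes values in $\mathbb{R}^+$, so $\ime(\act{t}{i})>0$ for every $i\in[\plt{t}]$.

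\emph{Base case} ($i=\plt{t}$). By definition $\pst{t}{\plt{t}}=\exp(-x)$ with
\be
x:=\frac{\lr\lost{t}}{\prod_{i\in[\plt{t}]}\polt{t}(\act{t}{i})}\,.
\ee
The action $\act{t}{i}$ was drawn with probability $\polt{t}(\act{t}{i})$, so with probability one each such factor is strictly positive and the denominator is positive. Moreover $\lr>0$, since $\lrh>2/\sqrt{T}$ gives $\lrh^2T>4$ and hence $\ln(\lrh^2T)>0$. Together with $\lost{t}\in[0,1]$ this yields $x\geq0$, and therefore $\pst{t}{\plt{t}}=\exp(-x)\in(0,1]$.

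\emph{Inductive step.} Suppose $\pst{t}{i}\in(0,1]$ for some $i\in[\plt{t}]$, and write $\beta:=\ime(\act{t}{i})>0$ and $p:=\polt{t}(\act{t}{i})\in(0,1]$. Since $\beta>0$, we have $\pst{t}{i}^{\beta}\in(0,1]$, so $q:=1-\pst{t}{i}^{\beta}\in[0,1)$. The inner quantity $1-qp$ is then a convex combination:
\be
1-qp=(1-p)\cdot1+p\cdot\pst{t}{i}^{\beta}\,.
\ee
Its two endpoints $1$ and $\pst{t}{i}^{\beta}$ both lie in $(0,1]$, so $1-qp\in(0,1]$; strict positivity follows from $qp<1$. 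Raising a number in $(0,1]$ to the positive power $1/\beta$ keeps it in $(0,1]$, so $\pst{t}{i-1}\in(0,1]$. This completes the induction.

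The only delicate point is the base case: the division by $\prod_i\polt{t}(\act{t}{i})$ must be well defined. As noted, this holds almost surely because the chosen actions have positive probability. Once the base case is secured, the inductive step is routine monotonicity of powers on $(0,1]$.
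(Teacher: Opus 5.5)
Your proof is correct and follows the paper's own argument. The paper also uses backward induction on $i$: the base case rests on $\eta>0$, $\ell_t\geq 0$ and $\pi_t(b_{t,k})>0$, and the inductive step shows $1-(1-\psi_{t,i}^{\beta})\pi_t(b_{t,i})\in(0,1]$ and then raises it to the positive power $1/\beta$. The only difference is cosmetic: you get $\pi_t(b_{t,k})>0$ almost surely from the sampling, whereas the paper takes it from strict positivity of $\pi_t$, which holds deterministically because of the uniform initialisation and the $\phi/|\mathcal{C}(v)|$ mixing term.
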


\begin{proof}
We prove by backwards induction on $i$. Since $\lr>0$, $\lost{t}\geq0$ and $\polt{t}(\act{t}{k})>0$ for all $k\in[\plt{t}]$\,, we have $\pst{t}{\plt{t}}\in (0,1]$. Now suppose that we have some $j\in[\plt{t}]$ such that $\pst{t}{j}\in(0,1]$. We will now show that $\pst{t}{j-1}\in(0,1]$ which will complete the inductive proof.

Since $\pst{t}{j}\in(0,1]$ and $\ime(\act{t}{j})>0$ we have:
\be
1-\pst{t}{j}^{\ime(\act{t}{j})}\in[0,1)
\ee
so since $\polt{t}(\act{t}{j})\in(0,1]$ we have:
\be
1-(1-\pst{t}{j}^{\ime(\act{t}{j})})\polt{t}(\act{t}{j})\in(0,1]\,.
\ee
Since $1/\ime(\act{t}{j})>0$ we then have that $\pst{t}{j-1}\in(0,1]$ as required. This completes the inductive proof.
\end{proof}

Note that by Lemma \ref{bboolem} we have, for all $t\in[T]$ and $\ac\in\acs$, that $\mgp{t}{\ac}>0$ and hence that $\mgt{t}{\ac}>0$ so that $\ln(\mgt{t}{\ac})$ exists as a real number. We will use this ability to take the logarithm throughout this analysis.

\begin{lemma}\label{lesstolem}
For all $\ac\in\acs$ we have:
\be
\prod_{t\in[T]}\sht{t}{\ac}\wgt{t}{\ac}\mgt{t}{\ac}\leq 1\,.
\ee
\end{lemma}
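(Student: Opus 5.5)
The plan is to fix $\ac\in\acs$ and split $[T]$ according to whether $\ac\in\rac{t}$. If $\ac\notin\rac{t}$, the definitions give $\mgt{t}{\ac}=1$, $\sht{t}{\ac}=1$ and $\wgt{t}{\ac}=1$, so these trials contribute a factor of $1$. If $\ac\in\rac{t}$, we have $\sht{t}{\ac}\mgt{t}{\ac}=(1-\shp)\mgp{t}{\ac}$. Moreover $\wgt{t}{\ac}=\shp/\nma$ exactly when $\ac\notin\rac{t-1}$, and $\wgt{t}{\ac}=1$ otherwise. So the set $\{t\in[T]\,|\,\ac\in\rac{t}\}$ splits into maximal runs of consecutive trials $[s,e]\cap\mathbb{N}$, and each run carries exactly one factor $\shp/\nma$, at its first trial $s$; here $\rac{0}=\emptyset$ handles $s=1$. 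Since every factor is positive by Lemma \ref{bboolem}, it suffices to show for each run that
\be
\frac{\shp}{\nma}\prod_{t=s}^{e}(1-\shp)\mgp{t}{\ac}\leq 1\,.
\ee

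The key step is the one-step inequality $\polt{t+1}(\ac)\geq(1-\shp)\polt{t}(\ac)\mgp{t}{\ac}$, valid for every $t\in[T]$. If the parent infoset of $\ac$ is some $\vet{t}{i}$, the update rule reads $\polt{t+1}(\ac)=\shp/|\ch{\vet{t}{i}}|+(1-\shp)\polt{t}(\ac)\mgp{t}{\ac}$, and the first term is nonnegative. Otherwise $\mgp{t}{\ac}=1$ and $\polt{t+1}(\ac)=\polt{t}(\ac)\geq(1-\shp)\polt{t}(\ac)$, using $\shp\in(0,1)$; this holds because $\lrh>2/\sqrt{T}$ gives $\shp<1/4$. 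Telescoping over the run, and using $\polt{e+1}(\ac)\leq 1$ (the previous theorem shows each $\polt{t}$ is a policy), gives
\be
\prod_{t=s}^{e}(1-\shp)\mgp{t}{\ac}\leq\frac{\polt{e+1}(\ac)}{\polt{s}(\ac)}\leq\frac{1}{\polt{s}(\ac)}\,.
\ee

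It then remains to show the lower bound $\polt{s}(\ac)\geq\shp/\nma$ for every $s\in[T]$; this is the only non-mechanical step. I would prove it by induction on $t$, establishing the invariant $\polt{t}(\ac)\geq\shp/\nma$ for all $t$. The base case holds because $\polt{1}(\ac)=1/|\ch{\pnt{\ac}}|\geq 1/\nma\geq\shp/\nma$. For the inductive step, if the parent infoset of $\ac$ is visited on trial $t$, the fixed-share term alone gives $\polt{t+1}(\ac)\geq\shp/|\ch{\pnt{\ac}}|\geq\shp/\nma$. If it is not visited, $\polt{t+1}(\ac)=\polt{t}(\ac)$ and the invariant is inherited. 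Combining this lower bound with the telescoped bound gives the required inequality for each run, and multiplying over the runs and the factor-one trials yields the lemma.
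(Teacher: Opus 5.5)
Your proof is correct and is essentially the paper's argument. The paper runs a single forward induction with the invariant that the partial product up to trial $s$ is at most $\pi_{s+1}(a)$ when $a\in\mathcal{A}^\dagger_s$ and at most $1$ otherwise, which is your run-by-run telescoping written out trial by trial: it uses the same one-step inequality $(1-\phi)m_t(a)\pi_t(a)\le\pi_{t+1}(a)$, absorbs $\phi/B$ at the start of a run via $\pi_s(a)\ge\phi/B$, and uses $\pi_{e+1}(a)\le 1$ at the end. Your explicit induction for $\pi_t(a)\ge\phi/B$, which needs $\phi\le 1$ for the base case, also fills in a step the paper simply asserts as following directly from the algorithm.
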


\begin{proof}
We take the inductive hypothesis that for all $s\in[T]\cup\{0\}$ we have:
\be
\prod_{t\in[s]}\sht{t}{\ac}\wgt{t}{\ac}\mgt{t}{\ac}\leq\indi{\ac\in\rac{s}}\polt{s+1}(\ac)+\indi{\ac\notin\rac{s}}
\ee
and prove by induction over $s$. The inductive hypothesis clearly holds for $s=0$ as $\ac\notin\rac{0}$. Now suppose we have some $q\in[T-1]\cup\{0\}$ such that the inductive hypothesis holds for $s=q$. We now show that it holds for $s=q+1$ which will complete the proof. We have the following cases:
\begin{itemize}
\item The first case is that $\ac\notin\rac{q+1}$. Here we have $\sht{q+1}{\ac}=1$, $\wgt{q+1}{\ac}=1$ and $\mgt{q+1}{\ac}=1$ and hence, by the inductive hypothesis we have:
\begin{align*}
\prod_{t\in[q+1]}\sht{t}{\ac}\wgt{t}{\ac}\mgt{t}{\ac}&=\prod_{t\in[q]}\sht{t}{\ac}\wgt{t}{\ac}\mgt{t}{\ac}\\
&\leq\indi{\ac\in\rac{q}}\polt{q+1}(\ac)+\indi{\ac\notin\rac{q}}\\
&\leq 1
\end{align*}
as required.
\item The second case is that $\ac\notin\rac{q}$ and $\ac\in\rac{q+1}$. Here we have $\sht{q+1}{\ac}=1-\shp$\,, $\wgt{q+1}{\ac}=\shp/\nma$ and $\mgt{q+1}{\ac}=\mgp{q+1}{\ac}$. Since, directly from the algorithm, we have that $\polt{q+1}(\ac)\geq\shp/\nma$ we then have $\wgt{q+1}{\ac}\leq\polt{q+1}(\ac)$. Directly from the algorithm we have: 
\be
(1-\shp)\mgp{q+1}{\ac}\polt{q+1}(\ac)\leq\polt{q+2}(\ac)\,.
\ee
Putting together gives us:
\be
\sht{q+1}{\ac}\wgt{q+1}{\ac}\mgt{q+1}{\ac}\leq\polt{q+2}(\ac)
\ee
So by the inductive hypothesis we have the result.
\item The third case is that $\ac\in\rac{q}$ and $\ac\in\rac{q+1}$. Here we have that $\sht{q+1}{\ac}=1-\shp$\,, $\wgt{q+1}{\ac}=1$ and $\mgt{q+1}{\ac}=\mgp{q+1}{\ac}$. Directly from the algorithm we have: 
\be
(1-\shp)\mgp{q+1}{\ac}\polt{q+1}(\ac)\leq\polt{q+2}(\ac)\,.
\ee
Putting together gives us:
\be
\sht{q+1}{\ac}\wgt{q+1}{\ac}\mgt{q+1}{\ac}\polt{q+1}(\ac)\leq\polt{q+2}(\ac)
\ee
So by the inductive hypothesis we have the result. 
\end{itemize}
We have now proved that the inductive hypothesis holds for $s=q+1$ and hence that it holds for all $s\in[T]\cup\{0\}$. In particular it holds for $s=T$ which gives us the result.
\end{proof}

\begin{lemma}\label{ooimestolem}
For all $t\in[T]$ we have:
\be
\sum_{\ac\in\rac{t}}\frac{1}{\ime(\ac)}=1\,.
\ee
\end{lemma}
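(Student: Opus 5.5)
We need $\sum_{\ac\in\rac{t}} 1/\ime(\ac)=1$. The idea is to prove a stronger, local statement for every infoset and then specialise it to the root. For each $\anod\in\lns$, let $\rac{t}(\anod)$ be the set of actions in $\rac{t}$ that are descendants of $\anod$, where $\anod$ is reached along the strategy $\lst{t}$. Formally, $\rac{t}(\anod)$ is the minimal set containing $\lst{t}(\anod)$ and closed under the rule that if $\ac\in\rac{t}(\anod)$ and $\anod'\in\ch{\ac}\cap\lns$ then $\lst{t}(\anod')\in\rac{t}(\anod)$. By definition $\rac{t}(\rot)=\rac{t}$. The claim to prove, by induction up the tree (on the height of $\anod$), is
\be
\sum_{\ac\in\rac{t}(\anod)}\frac{\ime(\anod)}{\ime(\ac)}=1\,.
\ee
Taking $\anod=\rot$ and using $\ime(\rot)=1$ then gives the lemma.

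For the inductive step, fix $\anod\in\lns$ and let $\ac^*:=\lst{t}(\anod)$. By the recursive structure, $\rac{t}(\anod)$ is the disjoint union of $\{\ac^*\}$ and the sets $\rac{t}(\anod')$ for $\anod'\in\ch{\ac^*}\cap\lns$. These sets are disjoint because they live in disjoint subtrees. From the definitions of $\ime$ we have $\ime(\ac^*)=\msn(\ac^*)\ime(\anod)$ and $\ime(\anod')=\ime(\ac^*)/\msn(\anod')=\msn(\ac^*)\ime(\anod)/\msn(\anod')$. By the inductive hypothesis applied to each such $\anod'$, the sum $\sum_{\ac\in\rac{t}(\anod')}\ime(\anod)/\ime(\ac)$ equals $\frac{\ime(\anod)}{\ime(\anod')}=\frac{\msn(\anod')}{\msn(\ac^*)}$. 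The term for $\ac^*$ itself is $1/\msn(\ac^*)$. Adding these up gives
\be
\frac{1+\sum_{\anod'\in\ch{\ac^*}\cap\lns}\msn(\anod')}{\msn(\ac^*)}=1\,,
\ee
which follows from the definition of $\msn(\ac^*)$. The base case is an infoset whose chosen action has no infoset children. There the sum reduces to $1/\msn(\ac^*)=1$, since $\msn(\ac^*)=1$, and this is in fact covered by the general step with an empty sum.

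The only delicate point is the decomposition of $\rac{t}(\anod)$. I need both disjointness and the fact that the minimal closed set decomposes exactly as claimed. Both follow from the tree structure: each action has a unique parent infoset, so an element of $\rac{t}(\anod)$ other than $\ac^*$ lies below exactly one child infoset of $\ac^*$. Everything else in the argument is a telescoping computation using the alternating multiply/divide definition of $\ime$.
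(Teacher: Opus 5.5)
Your proof is correct and is essentially the paper's argument: an induction up the tree showing that the sum of $1/\ime(\ac)$ over the actions of $\rac{t}$ lying below an infoset $\anod$ equals $1/\ime(\anod)$, with the step closed by $\msn(\lst{t}(\anod))=1+\sum_{\anod'\in\ch{\lst{t}(\anod)}\cap\lns}\msn(\anod')$. The differences are cosmetic: you define the subtree sets for every infoset and verify the disjoint decomposition explicitly, whereas the paper states the hypothesis only for infosets that are parents of actions in $\rac{t}$ and takes the decomposition as evident.
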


\nc{\rln}[1]{\mathcal{N}^\dag_{#1}}
\nc{\acdes}[2]{\mathcal{A}'_{#1}(#2)}

\begin{proof}
Let $\rln{t}:=\{\pnt{\ac}\,|\,\ac\in\rac{t}\}$. Given $\anod\in\rln{t}$ let $\acdes{t}{\anod}$ be the set of nodes in $\rac{t}$ that are descendants of $\anod$. We take the inductive hypothesis that for each $\anod\in\rln{t}$ we have:
\be
\sum_{\ac\in\acdes{t}{\anod}}\frac{1}{\ime(\ac)}=\frac{1}{\ime(\anod)}
\ee
and prove by induction up the tree. Hence, we may assume that for all $\anod'\in\ch{\lst{t}(\anod)}\cap\lns$ we have:
\be
\sum_{\ac\in\acdes{t}{\anod'}}\frac{1}{\ime(\ac)}=\frac{1}{\ime(\anod')}
\ee
which gives us:
\begin{align*}
\sum_{\ac\in\acdes{t}{\anod}}\frac{1}{\ime(\ac)}&=\frac{1}{\ime(\lst{t}(\anod))}+\sum_{\anod'\in\ch{\lst{t}(\anod)}\cap\lns}\sum_{\ac\in\acdes{t}{\anod'}}\frac{1}{\ime(\ac)}\\
&=\frac{1}{\ime(\lst{t}(\anod))}+\sum_{\anod'\in\ch{\lst{t}(\anod)}\cap\lns}\frac{1}{\ime(\anod')}\\
&=\frac{1}{\ime(\lst{t}(\anod))}+\sum_{\anod'\in\ch{\lst{t}(\anod)}\cap\lns}\frac{\msn(\anod')}{\ime(\lst{t}(\anod))}\\
&=\frac{\msn(\lst{t}(\anod))}{\ime(\lst{t}(\anod))}\\
&=\frac{1}{\ime(\anod)}
\end{align*}
as required.

We have now proved that the inductive hypothesis holds for all $\anod\in\rln{t}$ and hence that it holds for $\anod=\rot$ which gives us the result.
\end{proof}

\begin{lemma}\label{combolem1}
We have:
\be
\sum_{\ac\in\acs}\frac{1}{\ime(\ac)}\sum_{t\in[T]}\ln(\sht{t}{\ac})=T\ln(1-\shp)
\ee
\end{lemma}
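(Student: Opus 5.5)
The plan is to swap the order of summation and then apply Lemma \ref{ooimestolem} trial by trial. By definition, $\sht{t}{\ac}$ equals $1-\shp$ when $\ac\in\rac{t}$ and equals $1$ otherwise. Hence
\be
\ln(\sht{t}{\ac})=\indi{\ac\in\rac{t}}\ln(1-\shp)\,,
\ee
since $\ln(1)=0$. Note that $\ln(1-\shp)$ is well defined. Indeed $\lrh>2/\sqrt{T}$ gives $\lrh^2T>4$, so $\shp=1/(\lrh^2T)\in(0,1/4)$.

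Next, since all the sums are finite, we exchange them:
\be
\sum_{\ac\in\acs}\frac{1}{\ime(\ac)}\sum_{t\in[T]}\ln(\sht{t}{\ac})=\sum_{t\in[T]}\ln(1-\shp)\sum_{\ac\in\acs}\frac{\indi{\ac\in\rac{t}}}{\ime(\ac)}=\sum_{t\in[T]}\ln(1-\shp)\sum_{\ac\in\rac{t}}\frac{1}{\ime(\ac)}\,.
\ee
By Lemma \ref{ooimestolem}, the inner sum equals $1$ for every $t\in[T]$. So the whole expression equals $T\ln(1-\shp)$, which is the claim.

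There is no real obstacle here. The only substantive input is Lemma \ref{ooimestolem}, which we may assume. The steps needing care are checking that $\sht{t}{\ac}$ takes only the two values $1-\shp$ and $1$, and that $\shp<1$ so the logarithm is finite.
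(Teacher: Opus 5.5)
Your proof is correct and matches the paper's argument: you rewrite $\ln(\sht{t}{\ac})$ as $\indi{\ac\in\rac{t}}\ln(1-\shp)$, exchange the finite sums, and apply Lemma~\ref{ooimestolem} on each trial. Your check that $\shp<1$, which follows from $\lrh>2/\sqrt{T}$, is a small point the paper leaves implicit.
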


\begin{proof}
By Lemma \ref{ooimestolem}, we have:
\begin{align*}
\sum_{\ac\in\acs}\frac{1}{\ime(\ac)}\sum_{t\in[T]}\ln(\sht{t}{\ac})&=\sum_{\ac\in\acs}\frac{1}{\ime(\ac)}\sum_{t\in[T]}\indi{\ac\in\rac{t}}\ln(1-\shp)\\
&=\ln(1-\shp)\sum_{t\in[T]}\sum_{\ac\in\acs}\indi{\ac\in\rac{t}}\frac{1}{\ime(\ac)}\\
&=\ln(1-\shp)\sum_{t\in[T]}\sum_{\ac\in\rac{t}}\frac{1}{\ime(\ac)}\\
&=\ln(1-\shp)\sum_{t\in[T]}1\\
&=\ln(1-\shp)T
\end{align*}
as required.
\end{proof}

\nc{\aset}{\mathcal{X}}

\begin{lemma}\label{combolem2}
We have:
\be
\sum_{\ac\in\acs}\frac{1}{\ime(\ac)}\sum_{t\in[T]}\ln(\wgt{t}{\ac})\geq\nsws{\lsts}\ln\left(\frac{\shp}{\nma}\right)\,.
\ee
\end{lemma}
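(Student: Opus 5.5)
Since $\wgt{t}{\ac}$ equals $\shp/\nma$ when $\ac\notin\rac{t-1}$ and $\ac\in\rac{t}$, and equals $1$ otherwise, we have $\ln(\wgt{t}{\ac})=\indi{\ac\in\rac{t}\setminus\rac{t-1}}\ln(\shp/\nma)$. Hence the left-hand side equals
\be
\ln\left(\frac{\shp}{\nma}\right)\sum_{t\in[T]}\sum_{\ac\in\rac{t}\setminus\rac{t-1}}\frac{1}{\ime(\ac)}\,.
\ee
Because $\shp/\nma\leq 1$, the logarithm is nonpositive. So it suffices to show
\be
\sum_{t\in[T]}\sum_{\ac\in\rac{t}\setminus\rac{t-1}}\frac{1}{\ime(\ac)}\leq\nsws{\lsts}\,,
\ee
after which multiplying by $\ln(\shp/\nma)\leq0$ gives the claim.

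To prove this, I will split the trials into two kinds. For $t=1$ we have $\rac{0}=\emptyset$, so the inner sum is $\sum_{\ac\in\rac{1}}1/\ime(\ac)=1$ by Lemma \ref{ooimestolem}. For $t\geq2$ with $\lst{t}=\lst{t-1}$, the sets satisfy $\rac{t}=\rac{t-1}$ (they are defined purely from the strategy), so the inner sum is $0$. For $t\geq2$ with $\lst{t}\neq\lst{t-1}$, the inner sum is at most $\sum_{\ac\in\rac{t}}1/\ime(\ac)=1$, again by Lemma \ref{ooimestolem}, since all terms are positive ($\ime>0$). Summing these cases, the total is at most $1+|\{t\in[T-1]\,|\,\lst{t+1}\neq\lst{t}\}|=\nsws{\lsts}$.

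There is no real obstacle here. The only points to check are that $\shp/\nma\leq1$ (true since $\shp=1/(\lrh^2T)<1$ by $\lrh>2/\sqrt{T}$, and $\nma\geq1$), and that $\rac{t}$ depends only on $\lst{t}$, which is immediate from its definition as a minimal set determined by $\lst{t}$.
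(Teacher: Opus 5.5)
Your proposal is correct and is essentially the paper's argument: the paper also notes that $\wgt{t}{\ac}=1$ both off $\rac{t}$ and on trials where the strategy does not change, and uses Lemma \ref{ooimestolem} to charge at most $\ln(\shp/\nma)$ to each of the $\nsws{\lsts}$ switch trials (including $t=1$). The only cosmetic difference is that you factor out $\ln(\shp/\nma)$ and bound the weighted count of newly entered actions, whereas the paper bounds $\ln(\wgt{t}{\ac})\geq\ln(\shp/\nma)$ term by term over all of $\rac{t}$.
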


\begin{proof}
Let $\aset$ be the set of all $t\in[T]$ such that either $t=1$ or $\lst{t-1}\neq\lst{t}$. Note that $|\aset|=\nsws{\lsts}$.

For all $t\in[T]\setminus\aset$ we have $\rac{t-1}=\rac{t}$ so for all $\ac\in\acs$ we have $\wgt{t}{\ac}=1$. Also, for all $t\in[T]$ and $\ac\in\acs\setminus\rac{t}$ we have $\wgt{t}{\ac}=1$. Hence, by Lemma \ref{ooimestolem}, we have:
\begin{align*}
\sum_{\ac\in\acs}\frac{1}{\ime(\ac)}\sum_{t\in[T]}\ln(\wgt{t}{\ac})&=\sum_{\ac\in\acs}\frac{1}{\ime(\ac)}\sum_{t\in\aset}\ln(\wgt{t}{\ac})\\
&=\sum_{t\in\aset}\sum_{\ac\in\acs}\frac{1}{\ime(\ac)}\ln(\wgt{t}{\ac})\\
&=\sum_{t\in\aset}\sum_{\ac\in\rac{t}}\frac{1}{\ime(\ac)}\ln(\wgt{t}{\ac})\\
&\geq\sum_{t\in\aset}\sum_{\ac\in\rac{t}}\frac{1}{\ime(\ac)}\ln(\shp/\nma)\\
&=\ln(\shp/\nma)\sum_{t\in\aset}\sum_{\ac\in\rac{t}}\frac{1}{\ime(\ac)}\\
&=\ln(\shp/\nma)\sum_{t\in\aset}1\\
&=\ln(\shp/\nma)\nsws{\lsts}
\end{align*}
as required.
\end{proof}

\nc{\ple}[1]{e_{#1}}
\nc{\vey}[2]{y_{#1,#2}}
\nc{\vc}[2]{c_{#1,#2}}
\nc{\cprob}[2]{\mathbb{P}[#1\,|\,#2]}

\begin{lemma}\label{cancellem}
For all $t\in[T]$ we have:
\be
\expt{\sum_{\ac\in\acs}\frac{\ln(\mgt{t}{\ac})}{\ime(\ac)}}=-\lr\loss{\lst{t}}{\rft{t}}-\expt{\pst{t}{0}}\,.
\ee
\end{lemma}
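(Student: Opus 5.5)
The plan is to compute $\sum_{\ac\in\acs}\ln(\mgt{t}{\ac})/\ime(\ac)$ pathwise on trial $t$, and then take expectations over the learner's random actions via an importance-weighting argument. I expect the left-hand side to come out as the importance-weighted loss of $\lst{t}$ minus a $\ln\pst{t}{0}$ term, and I read the last term of the statement with that in mind; this is checked as part of the final step.

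\emph{Step 1: only actions on the path contribute.}
- If $\ac\notin\rac{t}$ then $\mgt{t}{\ac}=1$, so $\ac$ contributes nothing.
- If $\ac\in\rac{t}$ but $\ac$ is not a child of any $\vet{t}{i}$, then $\mgp{t}{\ac}=1$, and again there is no contribution.

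So only actions $\ac\in\rac{t}\cap\ch{\vet{t}{i}}$ matter. Since $\lst{t}$ picks exactly one action per infoset, for each $i$ there is at most one such $\ac$, namely $\lst{t}(\vet{t}{i})$. It exists precisely when $\vet{t}{i}$ is reachable under $\lst{t}$, that is, when $\act{t}{k}=\lst{t}(\vet{t}{k})$ for all $k<i$.

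Let $J$ be the first index at which $\act{t}{J}\neq\lst{t}(\vet{t}{J})$, or $J=\plt{t}+1$ if there is none. Then the sum has contributions only from $i\le J$:
- For $i<J$, the contribution is $\ln\pst{t}{i}-\ln\pst{t}{i-1}$, because $\ac=\act{t}{i}$ and the exponent $\ime(\act{t}{i})$ cancels.
- For $i=J\le\plt{t}$, with $\ac=\lst{t}(\vet{t}{J})\neq\act{t}{J}$, the contribution is $-\frac{\ime(\act{t}{J})}{\ime(\ac)}\ln\pst{t}{J-1}=-\frac{\msn(\act{t}{J})}{\msn(\ac)}\ln\pst{t}{J-1}$. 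Here I use $\ime(\ac)=\msn(\ac)\ime(\vet{t}{J})$ for siblings.

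The telescoping sum leaves $\ln\pst{t}{J-1}-\ln\pst{t}{0}$. When $J=\plt{t}+1$ (full agreement), this equals $-\lr\lost{t}/\prod_i\polt{t}(\act{t}{i})-\ln\pst{t}{0}$.

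\emph{Step 2: the loss term.} The event that the learner's whole path agrees with $\lst{t}$ has probability $\prod_i\polt{t}(\act{t}{i})$ given the path. So the standard importance-weighting identity gives $\expt{\indi{J=\plt{t}+1}\lost{t}/\prod_i\polt{t}(\act{t}{i})}=\loss{\lst{t}}{\rft{t}}$. This produces the term $-\lr\loss{\lst{t}}{\rft{t}}$.

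\emph{Step 3 (main obstacle): the off-path and early-stop terms.} It remains to show that, in expectation, the residual terms collapse to the $\pst{t}{0}$ term. These residuals are $\ln\pst{t}{J-1}$ on agreement up to $J-1$, together with the deviation penalty $-\frac{\msn(\act{t}{J})}{\msn(\ac)}\ln\pst{t}{J-1}$.

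I would do this by backward induction on the depth, conditioning on reaching $\vet{t}{i}$ with agreement so far. The goal is to show that the conditional expectation of the remaining contributions from level $i$ onward equals a fixed function of $\pst{t}{i-1}$, using the recursion
\[
\pst{t}{i-1}^{\ime(\act{t}{i})}=1-\bigl(1-\pst{t}{i}^{\ime(\act{t}{i})}\bigr)\polt{t}(\act{t}{i}).
\]
The difficulty is that $\pst{t}{i}$ depends on the random future of the path. I would therefore first try to write the conditional expectation over $\act{t}{i}\sim\polt{t}$ explicitly, weight it by the $\msn$-ratios coming from $\ime$, and then check whether the $\ime$-dilation makes the sum over sibling actions telescope exactly; here Lemma~\ref{ooimestolem} is the tool that controls the weights $1/\ime(\ac)$.

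This is the step where I would verify the precise form of the $\pst{t}{0}$ term in the statement. It is also where the claimed equality might in fact only hold as an inequality.
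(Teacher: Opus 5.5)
Your Steps 1 and 2 follow essentially the paper's route: restrict to the comparator's actions hanging off the learner's path, telescope, and importance-weight the loss. You are right that the last term should be $-\mathbb{E}[\ln\psi_{t,0}]$; the paper's own proof ends with that, so $\mathbb{E}[\psi_{t,0}]$ in the statement is a typo. At the deviation index $J$ you are more careful than the paper. Its Equation~\eqref{cancellemeq2} gives the contribution of $c=\sigma_t(x_{t,J})\neq b_{t,J}$ as $-\ln\psi_{t,J-1}$, silently replacing by $1$ the ratio $\beta(b_{t,J})/\beta(c)=n(b_{t,J})/n(c)$ that you kept. With that replacement the deviation term cancels the telescoped $\ln\psi_{t,J-1}$ exactly, and the pathwise identity $\sum_a \ln g_t(a)/\beta(a)=-[J=d_t+1]\,\eta\ell_t/\prod_i\pi_t(b_{t,i})-\ln\psi_{t,0}$ follows. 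That is why the paper needs no Step~3.

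The genuine gap is your Step~3. It is only a plan, and it cannot be completed, because the residual $[J\le d_t]\,(1-n(b_{t,J})/n(c))\ln\psi_{t,J-1}$ need not have mean zero; the lemma is false in general. Let $\mathcal{C}(r)=\{a_1,a_2\}$. Let $a_1$ lead to an infoset $v$ with two actions $a_3,a_4$ leading to leaves, and let $a_2$ lead to a leaf $L$ with $\lambda(L)>0$. Then $\beta(a_1)=3$, $\beta(a_2)=1$ and $\beta(a_3)=\beta(a_4)=3/2$. Take $t=1$ (uniform $\pi_1$) with $\sigma_1(r)=a_1$, $\sigma_1(v)=a_3$. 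When the learner takes $a_1$ the pathwise identity holds, since $\beta(a_3)=\beta(a_4)$. When it takes $a_2$, the only nonzero term comes from $m_1(a_1)=\psi_{1,0}^{-\beta(a_2)}$, giving $-\frac13\ln\psi_{1,0}$ instead of $-\ln\psi_{1,0}$, where $\psi_{1,0}=\frac12(1+e^{-2\eta\lambda(L)})$. Hence
\[
\mathbb{E}\Bigl[\sum_{a}\frac{\ln g_1(a)}{\beta(a)}\Bigr]=-\eta\Lambda(\sigma_1,\mu_1)-\mathbb{E}[\ln\psi_{1,0}]+\frac13\ln\frac{1+e^{-2\eta\lambda(L)}}{2},
\]
and the last term is strictly negative, roughly $-\eta\lambda(L)/3$.

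With $\sigma_1(r)=a_2$ instead, the term on the event that $a_1$ is taken becomes $-3\ln\psi_{1,0}$, a discrepancy of the opposite sign. So no one-sided version holds in general either. Moreover, the negative sign is exactly the direction Lemma~\ref{finlem} cannot tolerate, and the error is of order $\eta$, so it does not become negligible after that lemma divides by $\eta$.

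No backward induction closes this. Lemma~\ref{ooimestolem} controls $\sum_{a\in\mathcal{A}^\dagger_t}1/\beta(a)$, not the ratio $\beta(b_{t,J})/\beta(c)$. The identity is guaranteed, with your residual vanishing identically, when all actions at each infoset have the same $n$. Otherwise either the update, which uses the single exponent $\beta(b_{t,i})$ for every action at $x_{t,i}$, or the statement itself has to change.
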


\begin{proof}

Let $\ple{t}$ be the number of nodes in $\lns$ that would be encountered if the learner were to play the game against $\rft{t}$ according to strategy $\lst{t}$. For all $i\in[\ple{t}]$ let $\vey{t}{i}\in\lns$ be the $i$-th node in $\lns$ that would be encountered if the learner were to play the game against $\rft{t}$ according to strategy $\lst{t}$. For all $i\in[\ple{t}]$ let $\vc{t}{i}:=\lst{t}(\vey{t}{i})$

Take any $\ac\in\rac{t}$ with $\pnt{\ac}=\vet{t}{i}$ for some $i\in[\plt{t}]$. Assume, for contradiction, that $\pnt{\ac}\neq\vey{t}{i}$. Then let $j$ be the maximal element of $[i-1]$ such that $\vet{t}{j}=\vey{t}{j}$, which exists as $\vet{t}{1}=\rot=\vey{t}{1}$. As $\pnt{\ac}=\vet{t}{i}$ we must have that $\ac$ is a descendant of $\act{t}{j}$. But also, as $\ac\in\rac{t}$ and is a descendant of $\vet{t}{j}$ we must have that $\ac$ is a descendant of $\lst{t}(\vet{t}{j})$ so that $\act{t}{j}=\lst{t}(\vet{t}{j})=\vc{t}{j}$. But $\rft{t}(\act{t}{j})=\vet{t}{j+1}$ and $\rft{t}(\vc{t}{j})=\vey{t}{j+1}$ so that $\vet{t}{j+1}=\vey{t}{j+1}$ which is a contradiction. We have now shown that $\pnt{\ac}=\vey{t}{i}$ so since $\ac\in\rac{t}$ we must have $\ac=\lst{t}(\vey{t}{i})=\vc{t}{i}$.

We have now shown that for all $\ac\in\rac{t}$ such that there does not exist $i\in[\ple{t}]$ with $\ac=\vc{t}{i}$, we have $\pnt{\ac}\neq\vet{t}{j}$ for all $j\in[\plt{t}]$ and hence that $\mgt{t}{\ac}=\mgp{t}{\ac}=1$. Also, for all $\ac\in\acs\setminus\rac{t}$ we have $\mgt{t}{\ac}=1$. We have hence shown that:
\begin{equation}\label{cancellemeq1}
\sum_{\ac\in\acs}\frac{1}{\ime(\ac)}\ln(\mgt{t}{\ac})=\sum_{i\in[\ple{t}]}\frac{1}{\ime(\vc{t}{i})}\ln(\mgp{t}{\vc{t}{i}})\,.
\end{equation}
For all $i\in[\ple{t}]$ we have:
\be
\frac{1}{\ime(\vc{t}{i})}\ln(\mgp{t}{\vc{t}{i}})
=\indi{\vey{t}{i}=\vet{t}{i}}\left(\indi{\vc{t}{i}=\act{t}{i}}\ln(\pst{t}{i})-\ln(\pst{t}{i-1})\right)
\ee
so since $\vc{t}{i}=\act{t}{i}$ implies $\vey{t}{i}=\vet{t}{i}$ we have:
\begin{equation}\label{cancellemeq2}
\frac{1}{\ime(\vc{t}{i})}\ln(\mgp{t}{\vc{t}{i}})
=\indi{\vc{t}{i}=\act{t}{i}}\ln(\pst{t}{i})-\indi{\vey{t}{i}=\vet{t}{i}}\ln(\pst{t}{i-1})\,.
\end{equation}
For all $i\in[\ple{t}-1]$ we have that $\vc{t}{i}=\act{t}{i}$ implies $\vey{t}{i+1}=\rft{t}(\vc{t}{i})=\rft{t}(\act{t}{i})=\vet{t}{i+1}$. Clearly also we have that $\vey{t}{i+1}=\vet{t}{i+1}$ implies $\vc{t}{i}=\act{t}{i}$ so that:
\be
\indi{\vc{t}{i}=\act{t}{i}}=\indi{\vey{t}{i+1}=\vet{t}{i+1}}\,.
\ee
Substituting into Equation \ref{cancellemeq2} gives us, for all $i\in[\ple{t}-1]$, that:
\begin{equation}\label{cancellemeq3}
\frac{1}{\ime(\vc{t}{i})}\ln(\mgp{t}{\vc{t}{i}})
=\indi{\vey{t}{i+1}=\vet{t}{i+1}}\ln(\pst{t}{i})-\indi{\vey{t}{i}=\vet{t}{i}}\ln(\pst{t}{i-1})\,.
\end{equation}
Utilising equations \eqref{cancellemeq2} and \eqref{cancellemeq3} in a telescopic sum gives us:
\begin{equation}\label{cancellemeq4}
\sum_{i\in[\ple{t}]}\frac{1}{\ime(\vc{t}{i})}\ln(\mgp{t}{\vc{t}{i}})=\indi{\vc{t}{\ple{t}}=\act{t}{\ple{t}}}\ln(\pst{t}{\ple{t}})-\indi{\vey{t}{1}=\vet{t}{1}}\ln(\pst{t}{0})\,.
\end{equation}
Note that if $\vc{t}{\ple{t}}=\act{t}{\ple{t}}$ then $\rft{t}(\act{t}{\ple{t}})=\rft{t}(\vc{t}{\ple{t}})\in\tns$ so that $\ple{t}=\plt{t}$. Also note that if $\vc{t}{\ple{t}}=\act{t}{\ple{t}}$ then $\vc{t}{i}=\act{t}{i}$ for all $i\in[\ple{t}]$. Also note that if $\vc{t}{\ple{t}}=\act{t}{\ple{t}}$ then, since (by above) $\ple{t}=\plt{t}$, we have:
\be
\lost{t}=\losf(\rft{t}(\act{t}{\ple{t}}))=\losf(\rft{t}(\vc{t}{\ple{t}}))=\loss{\lst{t}}{\rft{t}}\,.
\ee
Hence, we have that:
\begin{align*}
\indi{\vc{t}{\ple{t}}=\act{t}{\ple{t}}}\ln(\pst{t}{\ple{t}})&=\indi{\vc{t}{\ple{t}}=\act{t}{\ple{t}}}\ln(\pst{t}{\plt{t}})\\
&=\frac{-\indi{\vc{t}{\ple{t}}=\act{t}{\ple{t}}}\lr\lost{t}}{\prod_{i\in[\plt{t}]}\polt{t}(\act{t}{i})}\\
&=\frac{-\indi{\vc{t}{\ple{t}}=\act{t}{\ple{t}}}\lr\lost{t}}{\prod_{i\in[\ple{t}]}\polt{t}(\act{t}{i})}\\
&=\frac{-\indi{\vc{t}{\ple{t}}=\act{t}{\ple{t}}}\lr\lost{t}}{\prod_{i\in[\ple{t}]}\polt{t}(\vc{t}{i})}\\
&=\frac{-\indi{\vc{t}{\ple{t}}=\act{t}{\ple{t}}}\lr\loss{\lst{t}}{\rft{t}}}{\cprob{\vc{t}{\ple{t}}=\act{t}{\ple{t}}}{\polt{t}}}\,.
\end{align*}
Substituting into Equation \eqref{cancellemeq4} and noting that $\vey{t}{1}=\rot=\vet{t}{1}$ gives us:
\be
\sum_{i\in[\ple{t}]}\frac{1}{\ime(\vc{t}{i})}\ln(\mgp{t}{\vc{t}{i}})= \frac{-\indi{\vc{t}{\ple{t}}=\act{t}{\ple{t}}}\lr\loss{\lst{t}}{\rft{t}}}{\cprob{\vc{t}{\ple{t}}=\act{t}{\ple{t}}}{\polt{t}}}-\ln(\pst{t}{0})
\ee
and hence:
\be
\mathbb{E}\left[\sum_{i\in[\ple{t}]}\frac{1}{\ime(\vc{t}{i})}\ln(\mgp{t}{\vc{t}{i}})\,\Bigg|\,\polt{t}\right]=-\lr\loss{\lst{t}}{\rft{t}}-\mathbb{E}[\ln(\pst{t}{0})\,|\,\polt{t}]
\ee
using Equation \ref{cancellemeq1}, the result follows.
\end{proof}

\begin{lemma}\label{pstzlem}
For all $t\in[T]$ we have:
\be
\ln(\pst{t}{0})\leq \frac{\lr^2}{2}\sum_{i\in[\plt{t}]}\frac{\ime(\act{t}{i})}{\prod_{j\in[i]}\polt{t}(\act{t}{j})}-\lr\lost{t}\,.
\ee
\end{lemma}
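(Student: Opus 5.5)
The plan is a backward induction along the realised path $\vet{t}{\plt{t}},\dots,\vet{t}{1}$. We track $\ln(\pst{t}{i})$ after rescaling by the product of action probabilities. Fix $t$, write $d:=\plt{t}$, $\pi_i:=\polt{t}(\act{t}{i})$, $\beta_i:=\ime(\act{t}{i})$, $L_i:=\ln(\pst{t}{i})$, and
\be
P_i:=\prod_{j\in[i]}\pi_j~~(\text{so }P_0=1)\,,~~~~~M_i:=P_iL_i\,.
\ee
The base case is $M_d=-\lr\lost{t}$, by the definition of $\pst{t}{d}$. The target inequality is exactly
\be
M_0\leq -\lr\lost{t}+\frac{\lr^2}{2}\sum_{i\in[d]}\frac{\beta_i}{P_i}\,,
\ee
so it suffices to show, for each $i\in[d]$, the one-step bound $M_{i-1}\leq M_i+\lr^2\beta_i/(2P_i)$, and then telescope.

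First I need a two-sided control on $L_i$. By Lemma \ref{bboolem}, $L_i\leq 0$. For a lower bound, apply weighted AM--GM: for $y\in(0,1]$ and $\pi\in(0,1]$ we have $1-\pi+\pi y\geq y^{\pi}$. Taking $y=\pst{t}{i}^{\beta_i}$ gives
\be
\pst{t}{i-1}^{\beta_i}\geq \pst{t}{i}^{\beta_i\pi_i}\,,~~\text{i.e.}~~L_{i-1}\geq \pi_iL_i\,.
\ee
Equivalently $M_{i-1}\geq M_i$, so $M_i\geq M_d=-\lr\lost{t}\geq-\lr$. Hence $|L_i|\leq \lr/P_i$ for every $i$; this is where $\lost{t}\in[0,1]$ is used.

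For the upper bound at step $i$, use $e^x\leq 1+x+x^2/2$ for $x\leq 0$ with $x=\beta_iL_i$. This gives
\be
\pst{t}{i-1}^{\beta_i}=1-\pi_i\left(1-e^{\beta_iL_i}\right)\leq 1+\pi_i\left(\beta_iL_i+\tfrac{1}{2}\beta_i^2L_i^2\right)\,.
\ee
Taking logarithms, using $\ln(1+y)\leq y$, and dividing by $\beta_i>0$:
\be
L_{i-1}\leq \pi_iL_i+\tfrac{1}{2}\pi_i\beta_iL_i^2\,.
\ee
Multiplying by $P_{i-1}$, noting $P_{i-1}\pi_i=P_i$, and inserting $L_i^2\leq \lr^2/P_i^2$:
\be
M_{i-1}\leq M_i+\tfrac12 P_i\beta_i\frac{\lr^2}{P_i^2}=M_i+\frac{\lr^2\beta_i}{2P_i}\,.
\ee
This is the one-step bound. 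Summing over $i\in[d]$ and using $M_0=L_0$ gives the lemma.

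The main obstacle is bounding the quadratic term so that it carries the factor $1/\prod_{j\in[i]}\polt{t}(\act{t}{j})$ rather than its square. This is resolved by working with the rescaled quantities $M_i$ and by the AM--GM lower bound, which shows $M_i$ is monotone and hence $|L_i|\leq\lr/P_i$. Once that is in place, the rest is the elementary inequalities above.
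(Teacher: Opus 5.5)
Your proof is correct, and its skeleton (backward induction along the realised path, $e^x\le 1+x+x^2/2$ for $x\le 0$, then $\ln z\le z-1$, with $\ln\psi_{t,i}$ rescaled by $P_i=\prod_{j\in[i]}\pi_t(b_{t,j})$) is the paper's. The difference is in how the second-order term is controlled.

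The paper never lower-bounds $\ln\psi_{t,h}$. It carries the explicit upper bound $\ln\psi_{t,h}\le f'_{t,h}-f_{t,h}$ through the recursion, where $f_{t,h}=\eta\ell_t/P_h$ and $f'_{t,h}\ge 0$ collects the accumulated quadratic terms. It pushes this bound, not the true value, through the exponential and splits on its sign. If $f'_{t,h}\ge f_{t,h}$, it just uses $\psi_{t,h}^{\beta_h}\le 1$. Otherwise $0<f_{t,h}-f'_{t,h}\le f_{t,h}$ gives $(f'_{t,h}-f_{t,h})^2\le f_{t,h}^2=\eta^2\ell_t^2/P_h^2$.

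You apply the Taylor bound directly to the true value $\beta_iL_i\le 0$, which removes the case split. You pay for it with an extra ingredient: the weighted AM--GM step showing that $M_i=P_i\ln\psi_{t,i}$ is nonincreasing in $i$. This gives $L_i\ge-\eta\ell_t/P_i$, which bounds $L_i^2$ by exactly the same quantity.

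Your version yields a clean per-step increment that telescopes. As a by-product it also gives the matching lower bound $\ln\psi_{t,0}\ge-\eta\ell_t$. The paper's version is a single self-contained induction that needs no concavity argument, at the cost of tracking the accumulated error term $f'_{t,h}$ explicitly.
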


\nc{\arn}{z}
\nc{\fpa}[2]{f'_{#1,#2}}
\nc{\spa}[2]{f_{#1,#2}}

\begin{proof}
We take the inductive hypothesis that for all $k\in[\plt{t}]\cup\{0\}$ we have:
\be
\ln(\pst{t}{k})\leq\frac{\lr^2\lost{t}^2}{2}\sum_{i=k+1}^{\plt{t}}\frac{\ime(\act{t}{i})}{\left(\prod_{j\in[k]}\polt{t}(\act{t}{i})\right)\left(\prod_{j\in[i]}\polt{t}(\act{t}{i})\right)}-\frac{\lr\lost{t}}{\prod_{j\in[k]}\polt{t}(\act{t}{i})}
\ee
and prove by backwards induction on $k$. The inductive hypothesis clearly holds (with equality) for $k=\plt{t}$. Now assume that we have $h\in[\plt{t}]$ such that the inductive hypothesis holds for $k=h$. We will now show that it holds for $k=h-1$ which will complete the proof of the inductive hypothesis.

Define:
\be
\fpa{t}{h}:=\frac{\lr^2\lost{t}^2}{2}\sum_{i=h+1}^{\plt{t}}\frac{\ime(\act{t}{i})}{\left(\prod_{j\in[h]}\polt{t}(\act{t}{i})\right)\left(\prod_{j\in[i]}\polt{t}(\act{t}{i})\right)}
\ee
and:
\be
\spa{t}{h}:=\frac{\lr\lost{t}}{\prod_{j\in[h]}\polt{t}(\act{t}{i})}
\ee
noting that both these terms are positive and:
\be
\ln(\pst{t}{h})\leq\fpa{t}{h}-\spa{t}{h}\,.
\ee
We have the following two cases:
\begin{itemize}
\item In the first case we have $\fpa{t}{h}\geq\spa{t}{h}$. By Lemma \ref{bboolem} we have, since $\ime(\act{t}{h})>0$, that:
\begin{align*}
\pst{t}{h}^{\ime(\act{t}{h})}&\leq 1\\
&\leq 1 + \ime(\act{t}{h})(\fpa{t}{h} - \spa{t}{h})\\
&\leq 1 + \ime(\act{t}{h})(\fpa{t}{h} - \spa{t}{h}) + \frac{1}{2}\ime(\act{t}{h})^2\spa{t}{h}^2\,.
\end{align*}
\item In the second case we have $\fpa{t}{h}<\spa{t}{h}$. In this case we have, since $\ime(\act{t}{h})>0$, that:
\begin{align*}
\pst{t}{h}^{\ime(\act{t}{h})}&=\exp(\ime(\act{t}{h})\ln(\pst{t}{h}))\\
&\leq\exp(\ime(\act{t}{h})(\fpa{t}{h} - \spa{t}{h}))
\end{align*}
so since $\exp(z)\leq 1 +z+z^2/2$ for all $z\leq 0$ we have:
\begin{align*}
\pst{t}{h}^{\ime(\act{t}{h})}&\leq 1+\ime(\act{t}{h})(\fpa{t}{h} - \spa{t}{h})+\frac{1}{2}\ime(\act{t}{h})^2(\fpa{t}{h} - \spa{t}{h})^2\\
&\leq1+\ime(\act{t}{h})(\fpa{t}{h} - \spa{t}{h}) +\frac{1}{2}\ime(\act{t}{h})^2\spa{t}{h}^2\,.
\end{align*}
\end{itemize}
So in either case we have:
\be
\pst{t}{h}^{\ime(\act{t}{h})}\leq1+\ime(\act{t}{h})(\fpa{t}{h} - \spa{t}{h}) +\frac{1}{2}\ime(\act{t}{h})^2\spa{t}{h}^2
\ee
so that, since $\ime(\act{t}{h})>0$, we have:
\begin{align*}
\ln(\pst{t}{h-1})&=\frac{1}{\ime(\act{t}{h})}\ln\left(1-\left(1-\pst{t}{h}^{\ime(\act{t}{h})}\right)\polt{t}(\act{t}{h})\right)\\
&\leq\frac{1}{\ime(\act{t}{h})}\ln\left(1+\ime(\act{t}{h})(\fpa{t}{h} - \spa{t}{h})\polt{t}(\act{t}{h}) +\frac{1}{2}\ime(\act{t}{h})^2\spa{t}{h}^2\polt{t}(\act{t}{h})\right)
\end{align*}
and hence, since $\ln(z)\leq z-1$ for all $z>0$, we have:
\be
\ln(\pst{t}{h-1})\leq\fpa{t}{h}\polt{t}(\act{t}{h})-\spa{t}{h}\polt{t}(\act{t}{h})+\frac{1}{2}\ime(\act{t}{h})\spa{t}{h}^2\polt{t}(\act{t}{h})
\ee
which proves the inductive hypothesis holds for $k=h-1$.

We have hence proved that the inductive hypothesis holds always. In particular it holds for $k=0$ which, noting that $\lost{t}^2\leq1$, gives us the result.

\end{proof}

\begin{lemma}\label{mrialem}
We have:
\be
\msn(\rot)=\tnma\,.
\ee
\end{lemma}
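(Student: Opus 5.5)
We need to show $\msn(\rot)=\tnma$. The paper already remarks, right after defining $\msn$, that for every $\anod\in\lns\cup\acs$ the quantity $\msn(\anod)$ is the number of nodes of $\acs$ that are descendants of $\anod$, where we count $\anod$ itself as a descendant of itself when $\anod\in\acs$. Our plan is to make this remark rigorous by induction up the tree, and then specialise it to the root.

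\textbf{Inductive claim.} Let $D(\anod)$ denote the set of nodes of $\acs$ lying in the subtree rooted at $\anod$, including $\anod$ itself. We prove that $\msn(\anod)=|D(\anod)|$ for all $\anod\in\lns\cup\acs$, by strong induction on the height of the subtree rooted at $\anod$.

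\textbf{Case $\ac\in\acs$.} Every child of $\ac$ lies in $\lns\cup\tns$. A leaf has no descendants at all, so it contributes nothing to $D(\ac)$. The subtrees of distinct children are disjoint. Hence
\be
D(\ac)=\{\ac\}\,\cup\,\bigcup_{\anod\in\ch{\ac}\cap\lns}D(\anod)
\ee
is a disjoint union. By the induction hypothesis its cardinality is $1+\sum_{\anod\in\ch{\ac}\cap\lns}\msn(\anod)=\msn(\ac)$.

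\textbf{Case $\anod\in\lns$.} Here $\anod\notin\acs$, and all children of $\anod$ lie in $\acs$. So $D(\anod)$ is the disjoint union of the sets $D(\ac)$ over $\ac\in\ch{\anod}$. Its size is therefore $\sum_{\ac\in\ch{\anod}}\msn(\ac)=\msn(\anod)$.

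\textbf{Base of the induction.} The base cases are handled automatically by the same two computations. An action whose children are all leaves gets $\msn=1$. The recursion is well founded because the tree is finite, and every infoset has at least one action child, since a policy must be able to sum to $1$ over $\ch{\anod}$.

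\textbf{Conclusion.} Apply the claim at $\rot\in\lns$. Every node of the tree is a descendant of the root, so $D(\rot)=\acs$. This gives $\msn(\rot)=|\acs|=\tnma$.

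The only point needing care is the disjointness of the union in each case, which follows from the tree structure: each node has a unique parent. We expect no real obstacle.
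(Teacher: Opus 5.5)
Your proof is correct and takes the same route as the paper. The paper simply asserts that a straightforward induction up the tree shows $\msn(\anod)$ counts the action nodes in the subtree of $\anod$, and then evaluates this at $\rot$; you have written that induction out in full, including the disjointness and the convention that an action counts as its own descendant. One small aside: the observation that every infoset has an action child is not needed for well-foundedness, since finiteness of the tree suffices, and in any case infosets are internal nodes by definition.
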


\begin{proof}
By a simple induction up the tree we have that for any $\anod\in\lns\cup\acs$\,, $\msn(\anod)$ is the number of nodes in $\acs$ that are descendants of $\anod$. The result follows immediately.
\end{proof}

\nc{\lay}[2]{\mathcal{D}_{#1,#2}}
\nc{\racr}[1]{\mathcal{A}^*_{#1}}

For all $t\in[T]$ and $i\in[\depth]$ we define $\lay{t}{i}$ inductively as follows:
\begin{itemize}
\item $\lay{t}{1}:=\ch{\rot}$
\item For all $j\in[\depth-1]$ we have 
\be
\lay{t}{j+1}:=\bigcup_{\ac\in\lay{t}{j}}\ch{\rft{t}(\ac)}\,.
\ee
\end{itemize}
We define: 
\be
\racr{t}:=\bigcup_{i\in[\depth]}\lay{t}{i}
\ee
which is the set of nodes in $\acs$ that can possibly be encountered if the learner plays the game against environment $\rft{t}$.

\begin{lemma}\label{soimelem}
For all $t\in[T]$ we have:
\be
\sum_{\ac\in\racr{t}}\ime(\ac)\leq\depth\tnma\,.
\ee
\end{lemma}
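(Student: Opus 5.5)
We show that every layer contributes at most $\tnma$, i.e.
\be
\sum_{\ac\in\lay{t}{i}}\ime(\ac)\leq\tnma\qquad\text{for all }i\in[\depth].
\ee
Since $\racr{t}$ is the union of the $\depth$ sets $\lay{t}{i}$, summing over $i$ then gives the claim. The natural quantity to track is $\sum_{\ac\in\lay{t}{i}}\ime(\ac)/\msn(\ac)$. For each action $\ac$ with parent $\anod=\pnt{\ac}$, the definition of $\ime$ gives $\ime(\ac)=\msn(\ac)\ime(\anod)$, so $\ime(\ac)/\msn(\ac)=\ime(\anod)$. We will therefore work with infosets and prove the invariant
\be
\sum_{\anod\in\mathcal{I}_i}\ime(\anod)\msn(\anod)\leq\tnma,
\ee
where $\mathcal{I}_i$ denotes the set of infosets $\{\pnt{\ac}\,|\,\ac\in\lay{t}{i}\}$.

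First we check that $\lay{t}{i}$ is exactly $\bigcup_{\anod\in\mathcal{I}_i}\ch{\anod}$. For $i=1$ this holds because $\lay{t}{1}=\ch{\rot}$. For $i>1$, $\lay{t}{i}$ is by definition a union of full child sets $\ch{\rft{t}(\ac)}$, where $\rft{t}(\ac)\in\lns$; leaves have no children and contribute nothing. Moreover, distinct infosets in a layer are distinct tree nodes, so their child sets are disjoint. Using $\msn(\anod)=\sum_{\ac\in\ch{\anod}}\msn(\ac)$, this gives
\be
\sum_{\ac\in\lay{t}{i}}\ime(\ac)=\sum_{\anod\in\mathcal{I}_i}\ime(\anod)\sum_{\ac\in\ch{\anod}}\msn(\ac)=\sum_{\anod\in\mathcal{I}_i}\ime(\anod)\msn(\anod).
\ee
So it suffices to prove the invariant $\sum_{\anod\in\mathcal{I}_i}\ime(\anod)\msn(\anod)\leq\tnma$ by induction on $i$.

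The base case $i=1$ holds because $\mathcal{I}_1=\{\rot\}$, $\ime(\rot)=1$, and $\msn(\rot)=\tnma$ by Lemma \ref{mrialem}. For the step, each $\anod'\in\mathcal{I}_{i+1}$ equals $\rft{t}(\ac)$ for an action $\ac\in\lay{t}{i}$ with $\pnt{\anod'}=\ac$. Since $\rft{t}$ picks a single child, each $\ac$ produces at most one $\anod'$. For that $\anod'$ we have
\be
\ime(\anod')\msn(\anod')=\frac{\ime(\ac)}{\msn(\anod')}\,\msn(\anod')=\ime(\ac)=\ime(\pnt{\ac})\msn(\ac).
\ee
Summing over the actions of $\lay{t}{i}$ gives
\be
\sum_{\anod'\in\mathcal{I}_{i+1}}\ime(\anod')\msn(\anod')\leq\sum_{\anod\in\mathcal{I}_i}\ime(\anod)\sum_{\ac\in\ch{\anod}}\msn(\ac)=\sum_{\anod\in\mathcal{I}_i}\ime(\anod)\msn(\anod)\leq\tnma.
\ee

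Chaining the layer identity with the invariant gives $\sum_{\ac\in\lay{t}{i}}\ime(\ac)\leq\tnma$ for every $i\in[\depth]$. Finally, $\ime(\ac)>0$ for every action, so $\sum_{\ac\in\racr{t}}\ime(\ac)\leq\sum_{i\in[\depth]}\sum_{\ac\in\lay{t}{i}}\ime(\ac)\leq\depth\tnma$, which is the statement of Lemma \ref{soimelem}.

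The delicate step is the bookkeeping in the inductive step. It relies on $\rft{t}$ selecting one child per action, which makes the map from actions to next-layer infosets injective from the infoset side. It also relies on every node of $\lay{t}{i}$ being a learner action whose whole sibling set lies in the same layer. I expect the main care to go into stating the layer/infoset correspondence precisely; the remaining steps are straightforward algebra on $\ime$ and $\msn$.
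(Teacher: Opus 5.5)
Your proof is correct, but it is organised differently from the paper's.

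\textbf{The paper's route.} It runs a backward induction on the layer index and proves a subtree bound. For every $a\in\mathcal{D}_{t,i}$, the total $\beta$-weight of the actions of $\mathcal{A}^*_t$ descending from $a$ (including $a$ itself) is at most $(H-i+1)\beta(a)$. It then sums this over $a\in\mathcal{C}(r)$, using $\sum_{a\in\mathcal{C}(r)}\beta(a)=n(r)=A$.

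\textbf{Your route.} You run a forward induction showing that each layer separately carries weight at most $A$.

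\textbf{Shared core.} Both arguments rest on the same conservation identity: $\sum_{a'\in\mathcal{C}(\mu_t(a))}\beta(a')=\beta(\mu_t(a))\,n(\mu_t(a))=\beta(a)$ whenever $\mu_t(a)\in\mathcal{N}$. Weight disappears only at actions whose realised child is a leaf. Your detour through the infosets $\mathcal{I}_i$ and the quantity $\beta(v)n(v)$ is a repackaging of this identity. You could equally have inducted directly on $\sum_{a\in\mathcal{D}_{t,i}}\beta(a)$, since $\sum_{a'\in\mathcal{D}_{t,i+1}}\beta(a')=\sum_{a\in\mathcal{D}_{t,i}:\,\mu_t(a)\in\mathcal{N}}\beta(a)$.

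\textbf{What each buys.} Your route gives the per-layer bound $\sum_{a\in\mathcal{D}_{t,i}}\beta(a)\le A$ directly. This is exactly the quantity that appears in the proof of Lemma~\ref{elpstblem}, where the conditional expectation of the depth-$i$ term equals $\sum_{a\in\mathcal{D}_{t,i}}\beta(a)$. It also avoids the paper's base case about actions in the last layer having no further descendants in $\mathcal{A}^*_t$. The paper's route gives a subtree-local statement, which is what your argument yields when applied inside the subtree of a single action.
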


\nc{\srac}[2]{\mathcal{A}_{#1}^\circ(#2)}

\begin{proof}
Given any $\ac\in\racr{t}$ define $\srac{t}{\ac}$ to be the set of nodes in $\racr{t}$ that are descendants of $\ac$.

We take the inductive hypothesis that for all $i\in[\depth]$ and all $\ac\in\lay{t}{i}$ we have:
\be
\sum_{\ac'\in\srac{t}{\ac}}\ime(\ac')\leq(\depth-i+1)\ime(\ac)
\ee
and prove by backward induction on $i$. We clearly have the inductive hypothesis for $i=\depth$ as for any $\ac\in\lay{t}{\depth}$ we have $\srac{t}{\ac}=\{\ac\}$ so:
\be
\sum_{\ac'\in\srac{t}{\ac}}\ime(\ac')=\ime(\ac)
\ee
as required. Now suppose that we have some $j\in[\depth]\setminus\{1\}$ such that the inductive hypothesis holds for $k=j$. We now show that the inductive hypothesis holds for $k=j-1$ which will prove that the inductive hypothesis holds always.

Take any $\ac\in\lay{t}{j-1}$. We have the following two cases:
\begin{itemize}
\item We first consider the case that $\rft{t}(\ac)\in\tns$. In this case we have that $\srac{t}{\ac}=\{\ac\}$ so:
\begin{align*}
\sum_{\ac'\in\srac{t}{\ac}}\ime(\ac')&=\ime(\ac)\\
&<(\depth - (j-1) +1)\ime(\ac)
\end{align*}
as required.
\item We next consider the case that $\rft{t}(\ac)\in\lns$. Since:
\be
\srac{t}{\ac}=\{\ac\}\cup\bigcup_{\ac'\in\ch{\rft{t}(\ac)}}\srac{t}{\ac'}
\ee
and $\ch{\rft{t}(\ac)}\subseteq\lay{t}{j}$, we have:
\begin{align*}
\sum_{\ac'\in\srac{t}{\ac}}\ime(\ac')&=\ime(\ac)+\sum_{\ac'\in\ch{\rft{t}(\ac)}}\sum_{\ac''\in\srac{t}{\ac'}}\ime(\ac'')\\
&\leq\ime(\ac)+(\depth-j+1)\sum_{\ac'\in\ch{\rft{t}(\ac)}}\ime(\ac')\\
&=\ime(\ac)+(\depth-j+1)\sum_{\ac'\in\ch{\rft{t}(\ac)}}\msn(\ac')\ime(\rft{t}(\ac))\\
&=\ime(\ac)+(\depth-j+1)\ime(\rft{t}(\ac))\sum_{\ac'\in\ch{\rft{t}(\ac)}}\msn(\ac')\\
&=\ime(\ac)+(\depth-j+1)\ime(\rft{t}(\ac))\msn(\rft{t}(\ac))\\
&=\ime(\ac)+(\depth-j+1)\ime(\ac)\\
&=(\depth-(j-1)+1)\ime(\ac)
\end{align*}
as required.
\end{itemize}
We have hence shown that the inductive hypothesis holds for $i=j-1$ and hence that it holds always. In particular it holds for $i=1$. So since:
\be
\racr{t}=\bigcup_{\ac\in\ch{\rot}}\srac{t}{\ac}
\ee
and $\ch{\rot}=\lay{t}{1}$, we have, by Lemma \ref{mrialem}, that:
\begin{align*}
\sum_{\ac\in\racr{t}}\ime(\ac)&=\sum_{\ac\in\ch{\rot}}\sum_{\ac'\in\srac{t}{\ac}}\ime(\ac')\\
&=\sum_{\ac\in\ch{\rot}}(\depth-1+1)\ime(\ac)\\
&=\depth\sum_{\ac\in\ch{\rot}}\ime(\ac)\\
&=\depth\sum_{\ac\in\ch{\rot}}\msn(\ac)\ime(\rot)\\
&=\depth\sum_{\ac\in\ch{\rot}}\msn(\ac)\\
&=\depth\msn(\rot)\\
&=\depth\tnma
\end{align*}
as required.

\end{proof}

\nc{\anc}[2]{p^\dag_{#1}(#2)}

\begin{lemma}\label{elpstblem}
For all $t\in[T]$ we have:
\be
\expt{\ln(\pst{t}{0})}\leq\frac{\lr^2}{2}\depth\tnma-\lr\expt{\lost{t}}\,.
\ee
\end{lemma}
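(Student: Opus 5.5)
We start from Lemma \ref{pstzlem}, take expectations on both sides conditioned on $\polt{t}$, and then take the outer expectation. The $-\lr\lost{t}$ term passes through directly. What remains is to show that
\be
\mathbb{E}\left[\sum_{i\in[\plt{t}]}\frac{\ime(\act{t}{i})}{\prod_{j\in[i]}\polt{t}(\act{t}{j})}\,\Bigg|\,\polt{t}\right]\leq\depth\tnma\,.
\ee

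The key step is an importance-weighting identity. We rewrite the random sum as a sum over all actions in $\racr{t}$, namely
\be
\sum_{\ac\in\racr{t}}\ime(\ac)\,\frac{\indi{\ac\text{ is taken on trial }t}}{\prod_{j}\polt{t}(\act{t}{j})},
\ee
where the product runs over the actions on the path down to and including $\ac$. Each $\act{t}{i}$ with $i\in[\plt{t}]$ lies in $\racr{t}$: by induction, $\act{t}{i}\in\lay{t}{i}$, since $\vet{t}{1}=\rot$ and $\vet{t}{i+1}=\rft{t}(\act{t}{i})$. So the rewriting loses nothing.

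Now fix $\ac\in\racr{t}$. Since the environment $\rft{t}$ is deterministic, there is a unique root-to-$\ac$ path, and it is determined by the ancestor actions of $\ac$. The event that $\ac$ is taken on trial $t$ is exactly the event that the learner picks each ancestor action and then $\ac$. Given $\polt{t}$, these choices are made independently, so this event has probability equal to the product of $\polt{t}$ over those actions. On this event, the denominator equals that same deterministic product. Hence the conditional expectation of the $\ac$ term is exactly $\ime(\ac)$.

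Summing over $\ac\in\racr{t}$ and applying Lemma \ref{soimelem} gives at most $\depth\tnma$. Multiplying by $\lr^2/2$ and taking the outer expectation yields the claimed bound.

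The main subtlety is that the denominator in Lemma \ref{pstzlem} is a random product. We must check that, on the event that $\ac$ is taken, this product equals the deterministic probability of reaching and choosing $\ac$ under $\polt{t}$. This holds because the path up to $\ac$ is fixed by $\ac$ and $\rft{t}$ (perfect recall and a deterministic environment). It also requires $\polt{t}(\ac)>0$, which holds since $\polt{t}(\ac)\geq\shp/\nma$ after the first trial and $\polt{1}$ is uniform.
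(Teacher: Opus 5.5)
Your proof is correct and follows essentially the same route as the paper. You take the conditional expectation given $\pi_t$ of the bound in Lemma~\ref{pstzlem}, and use the importance-weighting identity that each action reachable under $\mu_t$ contributes exactly $\beta(a)$, because its inverse-probability denominator cancels the probability that it is played. You then conclude with Lemma~\ref{soimelem}. The only difference is bookkeeping: the paper groups the terms by depth $i$ over the layers $\mathcal{D}_{t,i}$, whereas you sum directly over $\mathcal{A}^*_t$.
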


\begin{proof}
For all $i\in[\depth]$, $j\in[i]$ and $\ac\in\lay{t}{i}$ let $\anc{j}{\ac}$ be the unique ancestor of $\ac$ that is contained in $\lay{t}{j}$.

Take any $i\in[\depth]$. First note that the only nodes in $\acs$ that can possibly be equal to $\act{t}{i}$ are the nodes in $\lay{t}{i}$. Note also that for any $\ac\in\lay{t}{i}$, if $\ac=\act{t}{i}$ then for all $j\in[i]$ we have $\act{t}{j}=\anc{j}{\ac}$. Hence, we have:
\begin{align*}
\mathbb{E}\left[\frac{\indi{i\leq\plt{t}}\ime(\act{t}{i})}{\prod_{j\in[i]}\polt{t}(\act{t}{i})}\,\Bigg|\,\polt{t}\right]&=\sum_{\ac\in\lay{t}{i}}\frac{\cprob{\ac=\act{t}{i}}{\polt{t}}\ime(\ac)}{\prod_{j\in[i]}\polt{t}(\anc{j}{\ac})}\\
&=\sum_{\ac\in\lay{t}{i}}\frac{\cprob{\ac=\act{t}{i}}{\polt{t}}\ime(\ac)}{\cprob{\ac=\act{t}{i}}{\polt{t}}}\\
&=\sum_{\ac\in\lay{t}{i}}\ime(\ac)\,.
\end{align*}
By lemmas \ref{soimelem} and \ref{pstzlem} we then have:
\begin{align*}
\mathbb{E}[\ln(\pst{t}{0})\,|\,\polt{t}]&\leq\frac{\lr^2}{2}\sum_{i\in[\depth]}\mathbb{E}\left[\frac{\indi{i\leq\plt{t}}\ime(\act{t}{i})}{\prod_{j\in[i]}\polt{t}(\act{t}{i})}\,\Bigg|\,\polt{t}\right]-\lr\mathbb{E}[\lost{t}\,|\,\polt{t}]\\
&=\frac{\lr^2}{2}\sum_{i\in[\depth]}\sum_{\ac\in\lay{t}{i}}\ime(\ac)-\lr\mathbb{E}[\lost{t}\,|\,\polt{t}]\\
&=\frac{\lr^2}{2}\sum_{\ac\in\racr{t}}\ime(\ac)-\lr\mathbb{E}[\lost{t}\,|\,\polt{t}]\\
&=\frac{\lr^2}{2}\depth\tnma - \lr\mathbb{E}[\lost{t}\,|\,\polt{t}]\\
\end{align*}
which implies the result.

\end{proof}

\begin{lemma}\label{finlem}
We have:
\be
\mathbb{E}\left[\sum_{t\in[T]}\lost{t}\right]-\sum_{t\in[T]}\loss{\lst{t}}{\rft{t}}\in\mathcal{O}\left(\left(\frac{1}{\lrh}+\lrh\nsws{\lsts}\right)\sqrt{\depth \tnma T\ln(\lrh^2T)}\right)\,.
\ee
\end{lemma}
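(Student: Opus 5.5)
The plan is to combine Lemma \ref{lesstolem} with Lemmas \ref{combolem1}, \ref{combolem2}, \ref{cancellem} and \ref{elpstblem}, in the standard potential-style argument for \textsc{FixedShare}.

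First, take logarithms in Lemma \ref{lesstolem}. For each $\ac\in\acs$ this gives
\[
\sum_{t\in[T]}\left(\ln(\sht{t}{\ac})+\ln(\wgt{t}{\ac})+\ln(\mgt{t}{\ac})\right)\leq 0 .
\]
Multiply by $1/\ime(\ac)>0$ and sum over $\ac\in\acs$. Lemma \ref{combolem1} turns the first term into $T\ln(1-\shp)$, and Lemma \ref{combolem2} lower bounds the second by $\nsws{\lsts}\ln(\shp/\nma)$. We obtain
\[
\sum_{t\in[T]}\sum_{\ac\in\acs}\frac{\ln(\mgt{t}{\ac})}{\ime(\ac)}\leq -T\ln(1-\shp)-\nsws{\lsts}\ln(\shp/\nma).
\]

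Next, take expectations and substitute Lemma \ref{cancellem} trial by trial. The left side becomes $-\lr\sum_t\loss{\lst{t}}{\rft{t}}-\sum_t\expt{\ln(\pst{t}{0})}$. (Lemma \ref{cancellem} is written with $\pst{t}{0}$, but its proof shows it means $\ln(\pst{t}{0})$.) Lemma \ref{elpstblem} gives $-\expt{\ln(\pst{t}{0})}\geq \lr\expt{\lost{t}}-\frac{\lr^2}{2}\depth\tnma$. Rearranging and dividing by $\lr$ yields
\[
\mathbb{E}\Big[\sum_t\lost{t}\Big]-\sum_t\loss{\lst{t}}{\rft{t}}\leq \frac{\lr}{2}\depth\tnma T+\frac{1}{\lr}\Big(-T\ln(1-\shp)+\nsws{\lsts}\ln(\nma/\shp)\Big).
\]

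Finally, plug in $\shp=1/(\lrh^2T)$ and $\lr$, using $\msn(\rot)=\tnma$ from Lemma \ref{mrialem}.
\begin{itemize}
\item Since $\lrh>2/\sqrt{T}$, we have $\shp<1/4$, so $-\ln(1-\shp)\leq 2\shp$ and $-T\ln(1-\shp)\leq 2/\lrh^2$.
\item We have $\ln(\nma/\shp)=\ln\nma+\ln(\lrh^2T)$. Since $\nma\leq\tnma$, the $\ln\nma$ piece can be absorbed as $\mathcal{O}(\ln\tnma)$.
\end{itemize}
With $\lr=\frac{1}{\lrh}\sqrt{2\ln(\lrh^2T)/(\depth\tnma T)}$, each term is then handled as follows.
\begin{itemize}
\item The variance term $\frac{\lr}{2}\depth\tnma T$ is of order $\frac{1}{\lrh}\sqrt{\depth\tnma T\ln(\lrh^2T)}$.
\item The switching term $\frac{1}{\lr}\nsws{\lsts}\ln(\lrh^2T)$ is of order $\lrh\nsws{\lsts}\sqrt{\depth\tnma T\ln(\lrh^2T)}$.
\item The shift-cost term $\frac{2}{\lrh^2\lr}=\frac{2}{\lrh}\sqrt{\depth\tnma T/(2\ln(\lrh^2T))}$ is of order $\frac{1}{\lrh}\sqrt{\depth\tnma T\ln(\lrh^2T)}$ once $\ln(\lrh^2T)$ is bounded below by the constant $\ln 4$.
\end{itemize}

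The main obstacle is the constants and logarithms. The $\ln\nma$ contribution is not literally dominated by $\ln(\lrh^2T)$, so I expect to absorb it into the $\mathcal{O}$ as a logarithmic factor, consistent with the $\tilde{\mathcal{O}}$ form stated in the abstract. The condition $\lrh>2/\sqrt{T}$ is what keeps $\ln(\lrh^2T)$ bounded below and $\shp$ small.
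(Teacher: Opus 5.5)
Your argument follows the paper's proof step for step. You take logarithms of Lemma~\ref{lesstolem}, weight by $1/\beta(a)$, apply Lemmas~\ref{combolem1} and~\ref{combolem2}, and then Lemmas~\ref{cancellem} and~\ref{elpstblem}, arriving at exactly the paper's intermediate inequality. You are right that Lemma~\ref{cancellem}, and the paper's use of it in this proof, should read $\ln(\psi_{t,0})$. Your handling of the $-T\ln(1-\phi)$ term, the variance term and the $\ln(\rho^2T)$ part of the switching term is also correct.

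The gap is the $\ln B$ piece. The lemma is an $\mathcal{O}(\cdot)$ statement whose only logarithm is $\sqrt{\ln(\rho^2T)}$. In general $\ln B$ is not $\mathcal{O}(\ln(\rho^2T))$: take $\rho^2T$ bounded and $B$ large. So absorbing it as an extra logarithmic factor proves a weaker $\tilde{\mathcal{O}}$ bound, not the stated one.

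The missing step is a vacuity reduction, which is what the paper does. The left-hand side is at most $T$ because losses lie in $[0,1]$. If $\rho^2T<A$, then $1/\rho>\sqrt{T/A}$, so
\[
\frac{1}{\rho}\sqrt{HAT\ln(\rho^2T)}>T\sqrt{H\ln(\rho^2T)}\geq T\sqrt{\ln 4}>T ,
\]
using $H\geq 1$ and $\rho^2T>4$ (which follows from $\rho>2/\sqrt{T}$). In that case the claim holds trivially.

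Otherwise $B\leq A\leq\rho^2T=1/\phi$, so $\ln(B/\phi)\leq 2\ln(1/\phi)=2\ln(\rho^2T)$. The switching term is then at most $\frac{2}{\eta}K(\sigma)\ln(\rho^2T)=\sqrt{2}\,\rho K(\sigma)\sqrt{HAT\ln(\rho^2T)}$. With this case split inserted, the rest of your proof goes through unchanged.
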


\begin{proof}
From Lemma \ref{lesstolem} we have, for all $\ac\in\acs$, that:
\begin{align*}
\sum_{t\in[T]}\ln(\sht{t}{\ac})+\sum_{t\in[T]}\ln(\wgt{t}{\ac})+\sum_{t\in[T]}\ln(\mgt{t}{\ac})&=\ln\left(\prod_{t\in[T]}\sht{t}{\ac}\wgt{t}{\ac}\mgt{t}{\ac}\right)\\
&\leq 0
\end{align*}
so that:
\begin{align*}
&\sum_{\ac\in\acs}\frac{1}{\ime(\ac)}\sum_{t\in[T]}\ln(\sht{t}{\ac})+\sum_{\ac\in\acs}\frac{1}{\ime(\ac)}\sum_{t\in[T]}\ln(\wgt{t}{\ac})+\sum_{t\in[T]}\sum_{\ac\in\acs}\frac{\ln(\mgt{t}{\ac})}{\ime(\ac)}\\
=&\sum_{\ac\in\acs}\frac{1}{\ime(\ac)}\left(\sum_{t\in[T]}\ln(\sht{t}{\ac})+\sum_{t\in[T]}\ln(\wgt{t}{\ac})+\sum_{t\in[T]}\ln(\mgt{t}{\ac})\right)\\
\leq&0
\end{align*}
and hence, by lemmas \ref{combolem1} and \ref{combolem2}, we have:
\be
T\ln(1-\shp)+\nsws{\lsts}\ln\left(\frac{\shp}{\nma}\right)+\sum_{t\in[T]}\sum_{\ac\in\acs}\frac{\ln(\mgt{t}{\ac})}{\ime(\ac)}\leq0
\ee
so that, by lemmas \ref{cancellem} and \ref{elpstblem} we have:
\begin{align*}
T\ln(1-\shp)+\nsws{\lsts}\ln\left(\frac{\shp}{\nma}\right)&\leq-\mathbb{E}\left[\sum_{t\in[T]}\sum_{\ac\in\acs}\frac{\ln(\mgt{t}{\ac})}{\ime(\ac)}\right]\\
&=-\sum_{t\in[T]}\mathbb{E}\left[\sum_{\ac\in\acs}\frac{\ln(\mgt{t}{\ac})}{\ime(\ac)}\right]\\
&=\sum_{t\in[T]}\left(\lr\loss{\lst{t}}{\rft{t}}+\expt{\pst{t}{0}}\right)\\
&\leq\sum_{t\in[T]}\left(\lr\loss{\lst{t}}{\rft{t}}+\frac{\lr^2}{2}\depth\tnma-\lr\expt{\lost{t}}\right)\\
&=\lr\sum_{t\in[T]}\loss{\lst{t}}{\rft{t}}+\frac{\lr^2}{2}\depth\tnma T - \lr\mathbb{E}\left[\sum_{t\in[T]}\lost{t}\right]\,.
\end{align*}
Rearranging gives us:
\be
\mathbb{E}\left[\sum_{t\in[T]}\lost{t}\right]-\sum_{t\in[T]}\loss{\lst{t}}{\rft{t}}\leq-\frac{1}{\lr}\left(T\ln(1-\shp)+\nsws{\lsts}\ln\left(\frac{\shp}{\nma}\right)\right)+\frac{\lr}{2}\depth\tnma T\,.
\ee
Now note that $\ln(1-\shp)=\ln(1-1/\lrh^2T)\in\mathcal{O}(-1/\lrh^2T)$ and since without loss of generality $1/\shp=\lrh^2T\geq \tnma\geq\nma$ (else the bound is vacuous) we have $\ln(\shp/\nma)\geq2\ln(\shp)\in\mathcal{O}(-\ln(\lrh^2T))$. Hence, we have:
\be
\mathbb{E}\left[\sum_{t\in[T]}\lost{t}\right]-\sum_{t\in[T]}\loss{\lst{t}}{\rft{t}}\in\mathcal{O}\left(\frac{1}{\lr\lrh^2}+\frac{1}{\lr}\nsws{\lsts}\ln(\lrh^2T)+\frac{\lr}{2}\depth\tnma T\right)
\ee
so since, by Lemma \ref{mrialem}, we have:
\be
\lr=\frac{1}{\lrh}\sqrt{\frac{2\ln(\lrh^2T)}{\depth \tnma T}}
\ee
we have the result.

\end{proof}

For any sequence of policies $\cpols\in\pols^T$, lemmas \ref{poltostlem} and \ref{finlem} give us:
\be
\mathbb{E}\left[\sum_{t\in[T]}\lost{t}\right]-\sum_{t\in[T]}\loss{\cpolt{t}}{\rft{t}}\in\mathcal{O}\left(\left(\frac{1}{\lrh}+\lrh\nsws{\cpols}\right)\sqrt{\depth \tnma T\ln(\lrh^2T)}\right)
\ee
which, since the computational complexity of \alg\ is immediate, completes the proof of Theorem \ref{mainth}.

\bibliographystyle{tmlr}
\bibliography{bib}

\end{document}